%% file: main.tex
\documentclass[journal,twoside,web]{ieeecolor}
\usepackage{generic}
\usepackage{cite}
\usepackage{amsmath,amssymb,amsfonts,mathtools}

\usepackage{amsthm}

\usepackage[table]{xcolor}
\newtheorem{proposition}{Proposition}
\newtheorem{corollary}{Corollary}
\newtheorem{remark}{Remark}
\usepackage{algorithmic}
\usepackage{graphicx,verbatim}
\usepackage{algorithm,algorithmic}
\usepackage{hyperref}
\hypersetup{hidelinks=true}
\usepackage{textcomp}
\usepackage[T1]{fontenc}
\usepackage{booktabs,multirow}
\definecolor{Gray}{gray}{0.9}

\usepackage{array, makecell}
\definecolor{Gray}{gray}{0.9}

\usepackage{silence}
\usepackage{bm}  
\usepackage{microtype}
\usepackage{siunitx}
\usepackage{tabularx}

\usepackage[ruled,vlined,linesnumbered,algo2e]{algorithm2e}
\def\refname{References}
\usepackage{etoolbox}
\makeatletter
\patchcmd{\thebibliography}{\itemsep 0pt plus pt}{\itemsep 0pt plus 1pt}{}{}
\makeatother

\def\BibTeX{{\rm B\kern-.05em{\sc i\kern-.025em b}\kern-.08em
    T\kern-.1667em\lower.7ex\hbox{E}\kern-.125emX}}
\begin{document}
\title{SPARCL: Spatially Adaptive Regional Coverage for Active Learning in Chest X-ray Analysis}

\author{Juho Jung, Seongjae Kang, Mijoo Kim, Sungduk Cho, Yunkyoung Jun, Sunghoon Joo and Yoona Hwang$^{\dagger}$
\thanks{This research was supported by a grant of the Medical data-driven hospital support project through the Korea Health Information Service(KHIS), funded by the Ministry of Health \& Welfare, Republic of Korea (grant number: 2026-62).}
\thanks{J. Jung, S. Kang, M. Kim, S. Cho, Y. Jun, and S. Joo are with VUNO Inc., Seoul, South Korea (e-mail: \{juho.jung, seongjae.kang, mijoo.kim, sungduk.cho, yunkyoung.jun, sunghoon.joo\}@vuno.co).}
\thanks{$^{\dagger}$ Corresponding author. Y. Hwang is with VUNO Inc., Seoul, South Korea (e-mail: yoona.hwang@vuno.co).}
\thanks{Code: \href{https://github.com/vuno/sparcl}{\texttt{https://github.com/vuno/sparcl}}.}}

\maketitle

% This research was supported by a grant of the Medical data-driven hospital support project through the Korea Health Information Service(KHIS), funded by the Ministry of Health & Welfare, Republic of Korea.

\begin{abstract}
% Pixel-level annotation for chest X-ray (CXR) lesion segmentation is expensive, making active learning (AL) essential, yet existing AL strategies fail to account for the spatial distribution of selected samples, performing inconsistently across focal (e.g., aortic calcification) and diffuse (e.g., nodules) pathologies. In this paper, we propose \emph{SPARCL} (Spatially Adaptive Regional Coverage for Active Learning), which combines predictive uncertainty with cumulative spatial coverage: the spatial distribution of already-labeled lesions initializes a concave coverage state, so regions annotated in earlier rounds yield diminishing returns while under-labeled regions keep attracting the budget, and a gate driven by the positional diversity of uncertainty reverts \emph{SPARCL} to uncertainty sampling when there is no spatial variation to exploit. The objective is monotone submodular, so linear-time greedy batch selection is near-optimal. Because aggregate metrics such as Dice hide regional failures, we further introduce two spatial-reliability metrics, \emph{Spatial Inconsistency (SI)} and \emph{Performance Coverage (PC)}, that quantify how uniformly performance is distributed across anatomical regions and improves over AL rounds. On an in-house calcification cohort and public CXR segmentation and detection benchmarks, \emph{SPARCL} reaches 95\% of fully supervised performance with 24\% of the data on the in-house cohort versus 29\% for the best baseline, and consistently improves spatial reliability, surfacing regional gains that aggregate metrics miss.
Pixel-level annotation for chest X-ray (CXR) lesion segmentation is expensive, making active learning (AL) essential. However, existing AL strategies fail to account for the spatial distribution of selected samples, resulting in inconsistent performance across focal (e.g., aortic calcification) and diffuse (e.g., nodules) pathologies. In this paper, we propose \emph{SPARCL} (Spatially Adaptive Regional Coverage for Active Learning) to dynamically balance predictive uncertainty with cumulative spatial coverage. By employing a concave coverage function, \emph{SPARCL} imposes diminishing returns on previously annotated regions, naturally directing the annotation budget toward under-labeled areas. Furthermore, an entropy gate reverts the algorithm to standard uncertainty sampling when spatial variation is absent. This coverage objective is monotone submodular, guaranteeing near-optimal greedy batch selection. To address how aggregate metrics like Dice obscure regional failures, we further introduce two metrics---\emph{Spatial Inconsistency (SI)} and \emph{Performance Coverage (PC)}---to quantify anatomical fairness across AL rounds. Evaluated on an in-house calcification cohort and public CXR benchmarks, \emph{SPARCL} reaches 95\% of fully supervised performance using only 24\% of the data, compared to 29\% for the best baseline. Ultimately, \emph{SPARCL} ensures robust spatial reliability, surfacing regional gains that aggregate metrics miss.
\end{abstract}

\begin{IEEEkeywords}
Active learning, Chest X-ray, Segmentation, Spatial Fairness, Uncertainty
\end{IEEEkeywords}

\input{contents/1_introduction}
\input{contents/2_methods}

\input{contents/3_experiments}

 \input{contents/4_results}

\input{contents/5_conclusions}

\input{contents/6_data}

\appendices

\section{Proofs of Section~\ref{subsec:theory}}
\label{app:proofs}

\noindent\textbf{Lemma 1 (concave over modular).} \emph{Let $g:2^{V}\!\to\!\mathbb{R}_{\ge0}$ be non-negative modular, $g(\mathcal{S})=\sum_{x\in\mathcal{S}}w_x$ with $w_x\ge0$, and $\phi:\mathbb{R}_{\ge0}\!\to\!\mathbb{R}$ concave and non-decreasing. Then $\mathcal{S}\mapsto\phi(g(\mathcal{S}))$ is monotone and submodular.}

% \emph{Proof.} Monotonicity: for $\mathcal{A}\subseteq\mathcal{B}$, $g(\mathcal{A})\le g(\mathcal{B})$ (non-negative weights) and $\phi$ non-decreasing give $\phi(g(\mathcal{A}))\le\phi(g(\mathcal{B}))$. Submodularity: for $\mathcal{A}\subseteq\mathcal{B}$, $x\notin\mathcal{B}$, write $a=g(\mathcal{A})\le b=g(\mathcal{B})$ and $w=w_x\ge0$. Since $\phi$ is concave, $z\mapsto\phi(z+w)-\phi(z)$ is non-increasing, so $\phi(a+w)-\phi(a)\ge\phi(b+w)-\phi(b)$, i.e.\ the marginal gain of $x$ does not increase as the set grows. \hfill$\square$

\emph{Proof.} Monotonicity: for \(\mathcal{A}\subseteq\mathcal{B}\), \(g(\mathcal{A})\le g(\mathcal{B})\) (non-negative weights) and \(\phi\) non-decreasing give \(\phi(g(\mathcal{A}))\le\phi(g(\mathcal{B}))\). Submodularity: for \(\mathcal{A}\subseteq\mathcal{B}\), \(x\notin\mathcal{B}\), write \(a=g(\mathcal{A})\le b=g(\mathcal{B})\) and \(w=w_x\ge0\). Since \(\phi\) is concave, \(z\mapsto\phi(z+w)-\phi(z)\) is non-increasing, so \(\phi(a+w)-\phi(a)\ge\phi(b+w)-\phi(b)\). \hfill\(\square\)

\noindent\textbf{Proposition~\ref{prop:submod}.} Fix $m$ and set $\phi_m(z)=\psi(H_{0,m}+z)-\psi(H_{0,m})$ for $z\ge0$; since $\psi$ is concave and non-decreasing and $H_{0,m}\ge0$ is a constant, $\phi_m$ is concave, non-decreasing, and $\phi_m(0)=0$. The map $\mathcal{S}\mapsto\sum_{x\in\mathcal{S}}q_m(x)$ is non-negative modular ($q_m(x)\ge0$), so by Lemma~1 each $\mathcal{S}\mapsto\phi_m\big(\sum_{x\in\mathcal{S}}q_m(x)\big)=\psi(H_m(\mathcal{S}))-\psi(H_{0,m})$ is monotone submodular, and a finite sum of monotone submodular functions is monotone submodular, so $F$ is monotone submodular. The uncertainty term $\sum_{x\in\mathcal{S}}U(x)$ is modular with non-negative coefficients $U(x)\ge0$ (min--max normalized entropy), hence monotone submodular. As $\lambda^{(t)}\ge0$, $A=\sum_x U(x)+\lambda^{(t)}F$ is a non-negative combination of monotone submodular functions and is therefore monotone submodular, with $A(\varnothing)=\lambda^{(t)}\sum_m\phi_m(0)=0$. \hfill$\square$

% \noindent\textbf{Corollary~\ref{cor:greedy}.} $A$ is monotone submodular, normalized ($A(\varnothing)=0$). For maximization under the cardinality constraint $|\mathcal{S}|\le b$, greedy (repeatedly adding the element of largest marginal gain) achieves a $(1-1/e)$ approximation of the optimum~\cite{nemhauser1978analysis}. \hfill$\square$

\noindent\textbf{Corollary~\ref{cor:greedy}.} \(A\) is monotone submodular, normalized (\(A(\varnothing)=0\)). For maximization under the cardinality constraint \(|\mathcal{S}|\le b\), the greedy algorithm achieves a \((1-1/e)\) approximation of the optimum~\cite{nemhauser1978analysis}. \hfill\(\square\)

\noindent\textbf{Remark~\ref{prop:eps}.} For $\psi_\varepsilon(z)=(z+\varepsilon)^{p}-\varepsilon^{p}$ with $p\in(0,1)$, $\varepsilon>0$: $\psi_\varepsilon'(z)=p(z+\varepsilon)^{p-1}>0$ (non-decreasing) and $\psi_\varepsilon''(z)=p(p-1)(z+\varepsilon)^{p-2}<0$ (concave); $\psi_\varepsilon(0)=\varepsilon^{p}-\varepsilon^{p}=0$ and $\psi_\varepsilon'(0)=p\,\varepsilon^{p-1}<\infty$. Hence $\psi_\varepsilon$ satisfies the hypotheses of Proposition~\ref{prop:submod}, so $F$ and $A$ built from $\psi_\varepsilon$ remain monotone submodular and Corollary~\ref{cor:greedy} holds; and $\psi_\varepsilon(z)\to z^{p}$ pointwise as $\varepsilon\to0$, recovering $\sqrt{z}$ at $p=\tfrac12$. \hfill$\square$

\input{contents/references}
\end{document}

%% file: contents/1_introduction.tex
\section{Introduction}

Pixel-level annotation for chest X-ray (CXR) lesion segmentation is expensive and time-consuming, requiring radiologists to carefully delineate subtle anatomical structures across diverse pathological contexts~\cite{long2015fully,ronneberger2015u}. Crucially, model performance is not uniform; it often degrades in challenging areas such as the retrocardiac space or lung apices~\cite{wang2017chestx,irvin2019chexpert}. Active Learning (AL) addresses this by iteratively selecting the most informative samples for annotation, aiming to maximize model performance under limited labeling budgets~\cite{settles2009active,ren2021survey}. However, standard classification AL strategies~\cite{gal2016dropout,sener2017active,budd2021survey,nguyen2021goal} often fail in medical segmentation due to their inability to adapt to distinct spatial patterns~\cite{yang2017suggestive,mackowiak2018cereals,shi2024predictive}. The core challenge is not merely selecting ``hard'' samples, but selecting samples that provides the robust improvements across spatial regions~\cite{smailagic2018medal,nath2020diminishing}.

Specifically, we identify two opposing regional patterns arising from the spatially heterogeneous nature of CXR pathologies: 
a \emph{diffuse} pattern, where lesions scatter across distant anatomical regions (e.g., diffuse pulmonary nodules), and a \emph{focal} pattern, where pathology is spatially compact (e.g., focal aortic calcification, pneumothorax). However, existing AL strategies fail to account for this spatial structure: uncertainty-based methods~\cite{gal2017deep,kadir2023edgeal,shi2024predictive,tian2023fairseg,yu2023two,follmer2024active} often oversample redundant local neighborhoods while missing informative signals in distant regions~\cite{yang2017suggestive,gao2025deep,marinakis2024pulmonary,kirsch2019batchbald,ash2019deep,siddiqui2020viewal}. Diversity-based methods~\cite{sener2017active,ash2019deep,cai2021exploring,kirsch2019batchbald,hacohen2022active,yehuda2022active} promote image-level variety but remain agnostic to region-level spatial structure, offering no guarantee that selected batches cover underrepresented anatomical areas~\cite{yang2017suggestive,mackowiak2018cereals,gorriz2017cost}. Even naive region-balancing heuristics~\cite{mackowiak2018cereals,wang2016cost,xie2022towards,shin2021all} assume uniform information distribution across regions, wasting budget on uninformative normal regions.

% \begin{figure*}[t]
%     \centering
%     \includegraphics[width=\textwidth]{figures/fig1_sparcl_ver5.pdf}
%     \caption{\textbf{Overall framework of \emph{SPARCL}.} Each unlabeled image yields an uncertainty score $U(x)$ and a spatial distribution $\mathbf{q}(x)$ of that uncertainty over grid bins. The coverage state starts from the bins occupied by already-labeled lesions ($H_0$); a gate $\lambda^{(t)}$ set by the positional diversity of uncertainty across the candidates closes only when that uncertainty concentrates in a few bins. Greedy marginal-gain maximization then builds the batch one image at a time, rewarding uncertainty and coverage of bins still underrepresented by the labeled set and by this round's picks, so coverage accumulates across rounds.}
%     \label{fig:sparcl_framework}
% \end{figure*}

\begin{figure*}[t]
    \centering
    \includegraphics[width=\textwidth]{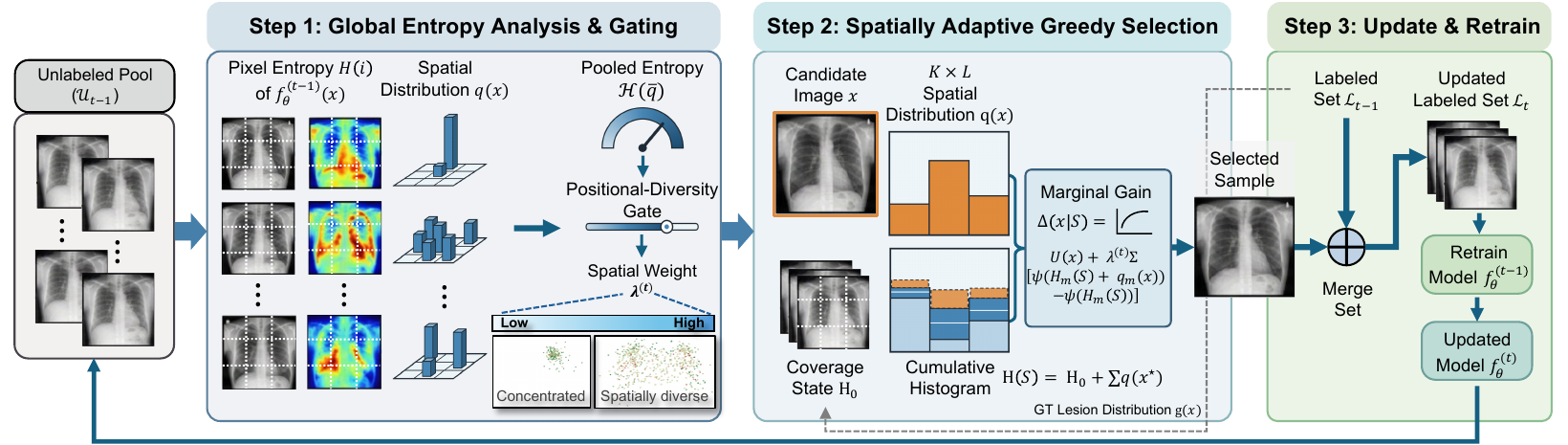}
    \caption{\textbf{Overall framework of \emph{SPARCL}.} 
    \textbf{Step~1:} Each unlabeled image \(x\in\mathcal{U}_{t-1}\) yields an uncertainty score \(U(x)\) and a spatial distribution \(q(x)\) of its pixel entropy. Over the top-\(rB\) candidates, the entropy \(\mathcal{H}(\bar{q})\) of their pooled distribution sets the positional-diversity gate \(\lambda^{(t)}\), lowering the spatial weight when the candidates' uncertainty concentrates in the same few bins.
    \textbf{Step~2:} Rather than starting from zero, the coverage state \(H(\mathcal{S})\) is initialized to \(H_0\), the ground-truth lesion distribution \(g(x)\) of the labeled set. Greedy marginal-gain maximization then adds one image at a time, updating \(H(\mathcal{S})\) after each pick, so that it rewards uncertainty and bins underrepresented by both the labeled set and the current batch.
    \textbf{Step~3:} The selected samples are annotated, merged into \(\mathcal{L}_t\), and used to retrain the model.}
    \label{fig:sparcl_framework}
\end{figure*}

To bridge this gap, we propose \emph{\textbf{SPARCL}} (\textbf{Sp}atially \textbf{A}daptive \textbf{R}egional \textbf{C}overage for Active \textbf{L}earning). Unlike static policies, \emph{SPARCL} balances predictive uncertainty with spatial coverage that prioritizes regions underrepresented in the labeled set. Coverage is accumulated across rounds, so regions in which lesions have already been annotated yield diminishing returns while under-labeled regions keep attracting the budget; a gate driven by the positional diversity of uncertainty across the most uncertain candidates switches coverage off only when there is no spatial variation to exploit, in which case \emph{SPARCL} reduces to uncertainty sampling. This allows a unified framework to handle \emph{focal} and \emph{diffuse} pathologies without manual heuristic tuning.

Furthermore, we argue that aggregate metrics like Dice or IoU are insufficient for AL evaluation for CXR segmentation, as they obscure regional biases~\cite{taha2015metrics,sudre2017generalised,reinke2021common,maier2024metrics}. We introduce two region-level metrics that quantify spatial reliability: \textbf{Spatial Inconsistency (SI)}, which measures performance disparity across anatomical regions, and \textbf{Performance Coverage (PC)}, which captures consistent improvements across anatomical regions between AL rounds. We summarize the key contributions as follows.

\begin{enumerate}
    \item We demonstrate that CXR pathologies exhibit two distinct spatial regimes (\emph{focal} vs.\ \emph{diffuse}), and reveal how existing AL methods fail to maintain consistent performance across these diverse anatomical patterns.
    \item We propose \textbf{\textit{SPARCL}}, an entropy-gated acquisition strategy that dynamically balances predictive uncertainty with cumulative spatial coverage. Its coverage objective is rigorously shown to be monotone submodular, guaranteeing near-optimal greedy batch selection while seamlessly adapting to both focal and diffuse regimes.
    \item We introduce two novel region-level evaluation metrics---\textbf{\textit{Spatial Inconsistency} (SI)} and \textbf{\textit{Performance Coverage} (PC)}---to rigorously quantify anatomical ``\textit{blind spots}.'' These provide a new standard to evaluate spatial fairness across any AL acquisition strategy, exposing regional biases obscured by standard aggregate metrics.
\end{enumerate}

\section{Related Work}

\subsection{Active learning for medical image segmentation}
Uncertainty sampling selects images on which the model is least confident, using predictive entropy or MC-dropout/BALD estimates~\cite{gal2016dropout,gal2017deep,yang2017suggestive}, learned loss prediction~\cite{yoo2019learning}, or edge/region-aware cues~\cite{kadir2023edgeal,siddiqui2020viewal}. Representativeness-based methods instead promote batch diversity through core-set~\cite{sener2017active}, gradient embeddings~\cite{ash2019deep}, or hybrid criteria~\cite{cai2021exploring,budd2021survey,ren2021survey,gaillochet2023active}. None of these criteria tracks \emph{where} in the image evidence has already been labeled.
\vspace{-5pt}

\subsection{Batch and submodular acquisition}
Submodularity prevents redundant batch selections by guaranteeing a greedy $(1-1/e)$ approximation~\cite{nemhauser1978analysis}. Prior works apply submodularity to batch mutual information~\cite{kirsch2019batchbald,follmer2024active} or Gaussian random fields~\cite{ma2013submodularity} in feature space. Coverage has also been cast in embedding space~\cite{yehuda2022active,hacohen2022active}; SPARCL instead applies the concave-over-modular reward of Lin and Bilmes~\cite{lin2011class} to a \emph{spatial} histogram accumulated over the labeled lesions, inheriting the $(1-1/e)$ guarantee.
% Coverage has also been cast in embedding space: ProbCover~\cite{yehuda2022active} maximizes the number of pool points covered by the selected set and TypiClust~\cite{hacohen2022active} selects typical points per cluster. SPARCL instead applies the classical concave-over-modular diversity reward of Lin and Bilmes~\cite{lin2011class} to a \emph{spatial} histogram that accumulates the labeled set's lesion locations across rounds, so the $(1-1/e)$ guarantee is inherited rather than new.

\subsection{Evaluating spatial reliability}
Aggregate metrics like Dice and IoU summarize an entire image but mask large regional disparities~\cite{taha2015metrics,sudre2017generalised,reinke2021common}. Metrics Reloaded~\cite{maier2024metrics} formalizes these pitfalls and recommends complementary metrics that expose localized failures hidden by aggregate overlap; in that spirit, and alongside fairness-oriented evaluation~\cite{tian2023fairseg}, SI and PC are region-level complements to Dice---not replacements---that expose anatomical blind spots and quantify spatial learning uniformity across AL rounds.

%% file: contents/2_methods.tex
\section{Methods}

\subsection{Problem Setup and Notation}
Let $\mathcal{U}_0 = \{x_j\}_{j=1}^{N}$ be an initial unlabeled pool of $N$ CXR images, and $\mathcal{L}_0 = \{ (x_i, y_i) \}_{i=1}^{M_0}$ be a seed set of $M_0$ labeled images with binary masks $y_i \in \{0, 1\}^{H \times W}$. 
We train a segmentation network $f_\theta$ to minimize the supervised loss $L_{\text{seg}}$. 
In each active learning (AL) round $t$, a batch of unlabeled images $\mathcal{B}_t \subset \mathcal{U}_{t-1}$ of size $B$ is selected by maximizing a set-level acquisition function $A(\mathcal{S})$ (Eq.~\eqref{eq:sparcl_value}).
These selected samples are then annotated by an oracle (e.g., radiologist) to obtain their corresponding masks $y$, and both sets are updated: $\mathcal{L}_t \leftarrow \mathcal{L}_{t-1} \cup \{(x, y) \mid x \in \mathcal{B}_t\}$ and $\mathcal{U}_t \leftarrow \mathcal{U}_{t-1} \setminus \mathcal{B}_t$.
To evaluate \textit{spatial reliability}, we partition the image domain $\Omega$ into $M$ disjoint spatial bins $\{\Omega_m\}_{m=1}^M$.

% Figure 1: Performance Curve
\begin{figure*}[t]
  \centering
  \includegraphics[width=\textwidth]{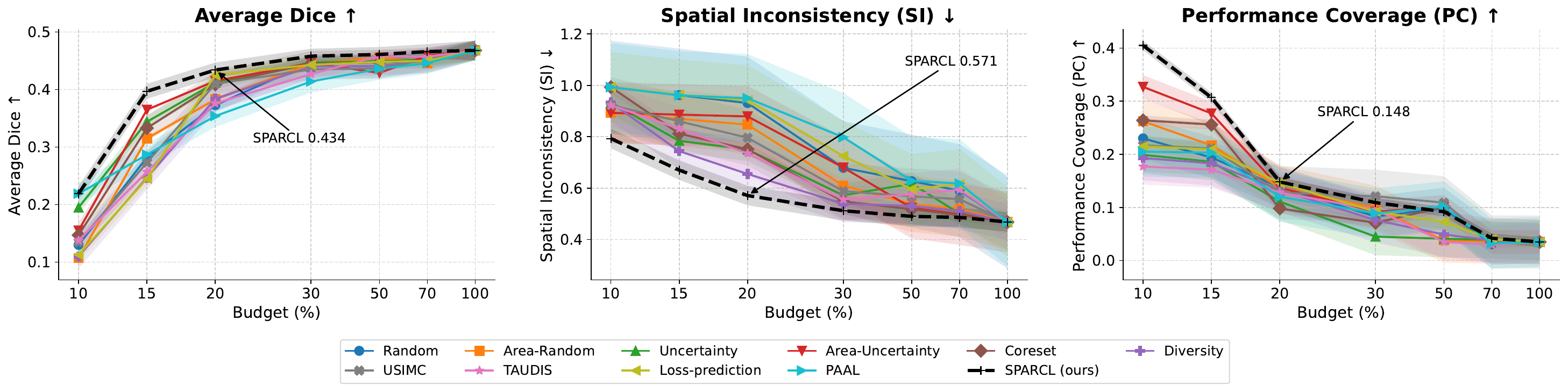}
  \caption{\textbf{Active learning performance across annotation budgets (10--100\%)}. \emph{SPARCL} reaches the highest Dice($\uparrow$) and lowest SI($\downarrow$) at every budget up to 50\% and the highest PC($\uparrow$) at $\le 20\%$. PC is a per-round gain and decays for all methods as learning saturates.} 
  \label{fig:alc_budget_curves}
\end{figure*}

% Figure 1: Performance Curve
% \begin{figure}[t]
%   \centering
%   \includegraphics[width=\columnwidth]{figures/alc_dice_si_1col_col.pdf}
%   \caption{\textbf{Active learning performance across annotation budgets (10--100\%)}. \emph{SPARCL} achieves the ideal learning trajectory by simultaneously maximizing label efficiency (Dice $\uparrow$) while minimizing spatial inconsistency ($SI$ $\downarrow$) compared to all baselines.}
%   \label{fig:alc_budget_curves}
% \end{figure}

\subsection{SPARCL: Spatially Adaptive Regional Coverage}
As outlined in Fig.~\ref{fig:sparcl_framework}, \emph{SPARCL} balances predictive uncertainty with \emph{cumulative} spatial coverage. Each round scores every unlabeled image by how uncertain it is and \emph{where} that uncertainty lies, restricts selection to the most uncertain candidates, initializes a coverage state from the bins that already-labeled lesions occupy, and greedily selects a batch that is both uncertain and complementary to what has been labeled; a gate $\lambda^{(t)}$ switches coverage off only when the candidates' uncertainty has no spatial variation to exploit.

% \subsubsection{Predictive Uncertainty Estimation.}
% We estimate pixel-wise uncertainty via $T$ stochastic forward passes (MC Dropout). The model outputs a probability map, so each pass $t = 1, \ldots, T$ gives $p^{(t)}(i) \in [0,1]$ for pixel $i$.
% The \emph{mean probability} is $\bar{p}(i) = \frac{1}{T} \sum_{t=1}^{T} p^{(t)}(i)$. We compute the predictive entropy as $H(i) = -\bar{p}(i) \log \bar{p}(i) - (1 - \bar{p}(i)) \log (1 - \bar{p}(i))$.
% The image-level uncertainty $U(x)$ is defined as the mean entropy across the pixels: $U(x) = \frac{1}{|\Omega|} \sum_{i \in \Omega} H(i)$.

\subsubsection{Predictive Uncertainty Estimation.}
Let $p(i)\in[0,1]$ be the foreground probability that $f_\theta$ assigns to pixel $i$. Its predictive entropy is $H(i) = -p(i) \log p(i) - (1 - p(i)) \log (1 - p(i))$, and the image-level uncertainty is the mean entropy over pixels, $U(x) = \frac{1}{|\Omega|} \sum_{i \in \Omega} H(i)$. Any stochastic estimate of $p(i)$ (e.g., MC Dropout or test-time augmentation) can be substituted without changing what follows.

\subsubsection{Uncertainty Spatial Histogram.}  
For each image $x$, we construct a spatial histogram $\mathbf{h}(x) \in \mathbb{R}^M$ with $M = KL$ for a $K \times L$ grid. The $m$-th entry is $h_m(x) = \sum_{i \in \Omega_m} H(i)$, aggregating pixel entropy within bin $m$. 
We normalize it per image, $q_m(x) = h_m(x)/\sum_{j} h_j(x)$, so that $\mathbf{q}(x)$ is a distribution over bins that describes \emph{where} the image's uncertainty lies irrespective of its total amount, which $U(x)$ already captures. Both the gate and the coverage term below operate on $\mathbf{q}(x)$.

\subsubsection{Candidate Set (Uncertainty Floor).}
Coverage should redirect the budget \emph{among} informative images, not admit uninformative ones. Each round is therefore restricted to the candidate set $\mathcal{U}'_{t-1}\subseteq\mathcal{U}_t$ of the $rB$ images with the largest $U(x)$ ($r=5$ throughout), over which $U(x)$ is min--max normalized to $[0,1]$; whatever its coverage gain, an image outside the top-$rB$ by uncertainty is never selected. All quantities below are computed on $\mathcal{U}'_t$.

\subsubsection{Cumulative Coverage State.}
For every lesion-bearing labeled image $x\in\mathcal{L}^{+}_{t-1}$ we form the analogous distribution of its \emph{ground-truth} lesion over bins, $g_m(x)=|\{i\in\Omega_m:\,y(i)=1\}|\,/\,|\{i\in\Omega:\,y(i)=1\}|$, and initialize the coverage state of round $t$ as
\begin{equation}
\vspace{-5pt}
H_{0,m}=\frac{B}{|\mathcal{L}^{+}_{t-1}|}\sum_{x\in\mathcal{L}^{+}_{t-1}} g_m(x),\qquad m=1,\dots,M,
\label{eq:h0}
\end{equation} 
the labeled set's lesion distribution rescaled to the mass of one batch. $H_{0}$ records \emph{where lesions have already been annotated} and $\mathbf{q}(x)$ \emph{where the model is now uncertain}; both are per-image distributions over the same bins, so they accumulate in one histogram, and bins already dense in the labeled set yield small concave gains while under-labeled bins keep a large gain from the first selection of the round. Within a round the state advances by the predicted $\mathbf{q}(x)$ of each selected image, but it is rebuilt from ground truth at the next round, so cross-round memory is carried by annotations and estimation noise in $\mathbf{q}$ does not accumulate.

\input{tables/alg_1}

\input{tables/table_template_table1_seg}
\input{tables/table_template_table3_spatial_reliability_metric}

\subsubsection{Positional-Diversity Gate.}
\label{sec:gate}
Cumulative coverage requires spatial variation across images, so we gate the coverage weight by the entropy of the \emph{pooled} uncertainty distribution over bins,
\begin{equation}
\bar{q}_m=\frac{1}{|\mathcal{U}'_t|}\sum_{x\in\mathcal{U}'_t} q_m(x),\qquad
\lambda^{(t)}=\lambda_{\max}\,\frac{\mathcal{H}(\bar{\mathbf{q}})}{\log M}\in[0,\lambda_{\max}],
\label{eq:gate}
\end{equation}
with \(\mathcal{H}(\bar{\mathbf{q}})=-\sum_m\bar{q}_m\log\bar{q}_m\). This entropy is high when uncertainty spans many bins (e.g., diffuse lesions, or focal lesions varying across patients). It is low only when uncertainty consistently concentrates in the same few bins (e.g., a single-site pathology). In this regime, enforcing coverage would merely select spurious background noise, so the gate closes, reverting \emph{SPARCL} to pure uncertainty sampling. We use the pooled rather than mean per-image entropy because, by Jensen's inequality, their difference \(\mathcal{H}(\bar{\mathbf{q}})-\frac{1}{|\mathcal{U}'_t|}\sum_x\mathcal{H}(\mathbf{q}(x))\) captures the between-patient spatial variation---precisely the diversity our coverage exploits.

% Figure 2: Sample Selection
\begin{figure*}[t]
    \centering
    \includegraphics[width=0.95\textwidth]{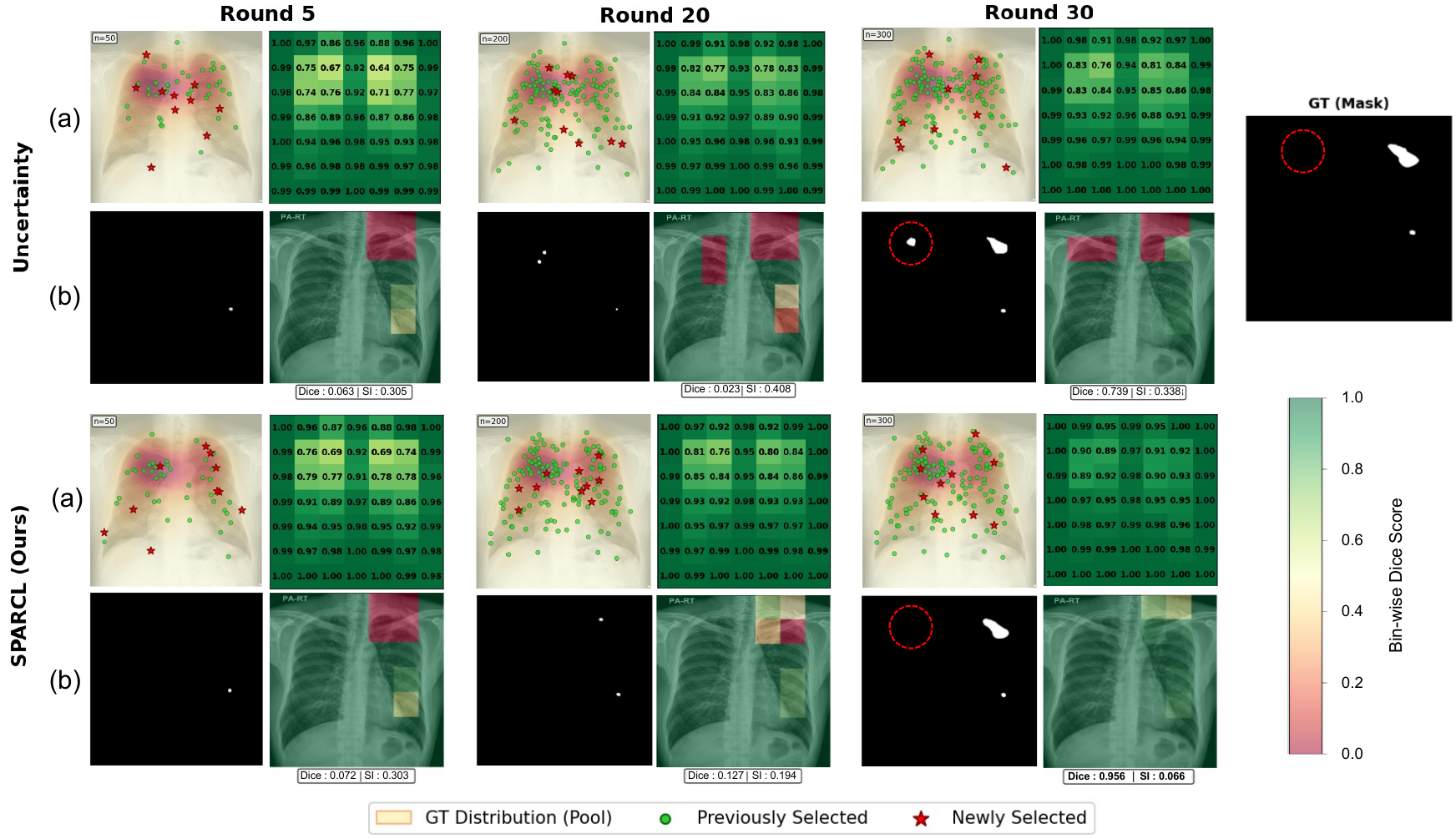}
    \caption{\textbf{Acquisition behavior and its spatial consequences} (in-house; rows: \textit{Uncertainty} vs.\ \emph{SPARCL}; columns: rounds 5/20/30).
    \textbf{(a)} Queries over the pool lesion density (\textcolor{green}{$\bullet$}: earlier rounds; \textcolor{red}{$\star$}: this round) beside the $7{\times}7$ per-bin test Dice grid (red$\rightarrow$green as Dice $0\rightarrow1$). The upper-lung-zone bins (2nd--3rd rows), where pool lesions are densest, remain ``blind spots'' under \textit{Uncertainty} (Dice $0.76$--$0.86$ at round~30), whereas \emph{SPARCL} raises them to $0.89$--$0.93$.
    \textbf{(b)} The \textcolor{red}{red dashed circle} marks a lesion-free region in the right upper lung zone of a test case: at round~30 \textit{Uncertainty} places a false positive there (Dice~$0.739$, SI~$0.338$) while \emph{SPARCL} leaves it empty (Dice~$0.956$, SI~$0.066$).}
    \label{fig:selection_map}
\end{figure*}

\subsubsection{Acquisition Objective.} We employ a greedy strategy with a concave gain $\psi$, by default $\psi(z)=\sqrt{z}$~\cite{lin2011class} (Remark~\ref{prop:eps} covers the regularized family). The marginal gain of adding image $x$ to the selected set $\mathcal{S}$ is:
\begin{equation}
\Delta(x | \mathcal{S}) = U(x) + \lambda^{(t)} \sum_{m=1}^{M} \left[ \psi(H_m(\mathcal{S}) + q_m(x)) - \psi(H_m(\mathcal{S})) \right]
\label{eq:sparcl_obj}
\end{equation}
where $H_m(\mathcal{S}) = H_{0,m}+\sum_{x' \in \mathcal{S}} q_m(x')$ is the coverage mass in bin $m$ contributed by the labeled set (Eq.~\eqref{eq:h0}) and by the images already selected in this round. Both terms are bounded: $U(x)\in[0,1]$ after normalization and, since each $\mathbf{q}(x)$ sums to one and $\psi$ is concave with $\psi(0)=0$, $\sum_m[\psi(H_m+q_m(x))-\psi(H_m)]\le\sum_m\psi(q_m(x))\le\sqrt{M}$, with equality only for an empty coverage state; the gain shrinks as bins fill, so $\lambda_{\max}$ sets the relative weight of the two terms on a fixed scale. Equation~\eqref{eq:sparcl_obj} is the marginal gain of the set function
\begin{equation}
A(\mathcal{S}) = \underbrace{\sum_{x \in \mathcal{S}} U(x)}_{\text{uncertainty (modular)}} \;+\; \lambda^{(t)} \underbrace{\sum_{m=1}^{M} \Big[\psi\!\big(H_m(\mathcal{S})\big)-\psi\!\big(H_{0,m}\big)\Big]}_{\text{coverage } F(\mathcal{S})},
\label{eq:sparcl_value}
\end{equation}
so that $\Delta(x\mid\mathcal{S}) = A(\mathcal{S}\cup\{x\}) - A(\mathcal{S})$, and each round selects a batch from the candidate set $\mathcal{U}'_t$ by greedily maximizing $A$ subject to $|\mathcal{S}|\le B$ (\textbf{Algorithm~\ref{alg:sparcl}}). Sec.~\ref{subsec:theory} shows this greedy rule inherits a constant-factor optimality guarantee.

\subsection{Theoretical Guarantees}
\label{subsec:theory}
% Casting acquisition as maximizing the set function $A$ of Eq.~\eqref{eq:sparcl_value} yields a standard approximation guarantee for the greedy schedule. While concave-over-modular coverage~\cite{lin2011class} and submodular batch acquisition in information or feature spaces~\cite{kirsch2019batchbald,ma2013submodularity} are established, our contribution applies this guarantee to a gated, \emph{cumulative spatial-coverage} objective. Full proofs for all theoretical claims are provided in Appendix~\ref{app:proofs}.
Casting acquisition as maximizing the set function $A$ of Eq.~\eqref{eq:sparcl_value} yields a standard approximation guarantee for the greedy schedule; proofs are in Appendix~\ref{app:proofs}.

\begin{proposition}[Monotone submodularity]
\label{prop:submod}
Let $\psi:\mathbb{R}_{\ge0}\!\to\!\mathbb{R}_{\ge0}$ be concave and non-decreasing with $\psi(0)=0$, let $q_m(x)\ge0$, and let $H_{0,m}\ge0$ be fixed. Then the coverage term $F(\mathcal{S})=\sum_{m=1}^{M}\big[\psi\big(H_{0,m}+\sum_{x\in\mathcal{S}}q_m(x)\big)-\psi(H_{0,m})\big]$ is monotone and submodular, and for any gate $\lambda^{(t)}\ge0$ the acquisition value $A(\mathcal{S})$ of Eq.~\eqref{eq:sparcl_value} is monotone submodular with $A(\varnothing)=0$.
\end{proposition}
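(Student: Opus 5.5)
The plan is to reduce everything to the standard concave-over-modular fact, proved as a separate lemma. The lemma reads: if $g(\mathcal{S})=\sum_{x\in\mathcal{S}}w_x$ with $w_x\ge0$ and $\phi$ is concave and non-decreasing on $\mathbb{R}_{\ge0}$, then $\mathcal{S}\mapsto\phi(g(\mathcal{S}))$ is monotone and submodular.

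To prove the lemma, take $\mathcal{A}\subseteq\mathcal{B}$.
\begin{itemize}
\item \emph{Monotonicity.} Non-negative weights give $g(\mathcal{A})\le g(\mathcal{B})$, and since $\phi$ is non-decreasing, $\phi(g(\mathcal{A}))\le\phi(g(\mathcal{B}))$.
\item \emph{Submodularity.} Fix $x\notin\mathcal{B}$ and write $a=g(\mathcal{A})\le b=g(\mathcal{B})$, $w=w_x\ge0$. Concavity of $\phi$ makes the increment $z\mapsto\phi(z+w)-\phi(z)$ non-increasing, because chord slopes of a concave function decrease. Hence $\phi(a+w)-\phi(a)\ge\phi(b+w)-\phi(b)$, which is exactly the diminishing-returns inequality.
\end{itemize}

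To apply the lemma, fix a bin $m$ and set $\phi_m(z)=\psi(H_{0,m}+z)-\psi(H_{0,m})$ for $z\ge0$. A translate of a concave, non-decreasing function minus a constant is still concave and non-decreasing, and $\phi_m(0)=0$. The inner map $\mathcal{S}\mapsto\sum_{x\in\mathcal{S}}q_m(x)$ is non-negative modular because $q_m(x)\ge0$. The lemma therefore shows that each summand $\psi(H_m(\mathcal{S}))-\psi(H_{0,m})$ of $F$ is monotone submodular. Both properties are preserved under finite sums, since marginal gains add, so $F$ is monotone submodular. Note that $H_{0,m}$ is fixed within a round; the fact that it is rebuilt between rounds does not matter, because the claim is per round.

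Next, the uncertainty term $\sum_{x\in\mathcal{S}}U(x)$ is modular with coefficients $U(x)\in[0,1]$ after min--max normalization. It is therefore monotone, and it is submodular with equality. Since $\lambda^{(t)}\ge0$ by Eq.~\eqref{eq:gate}, $A$ is a non-negative combination of monotone submodular functions. Non-negative scaling and addition preserve both properties, so $A$ is monotone submodular. Finally, $A(\varnothing)=0+\lambda^{(t)}\sum_m\phi_m(0)=0$.

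The only step needing care is the increment-monotonicity claim for concave $\phi$ in the lemma. I would prove it via the three-chord inequality, which needs no differentiability. This matters because $\sqrt{z}$ has infinite slope at $0$, and that point is reached when $H_{0,m}=0$. The three-chord argument also covers the boundary case $w=0$ trivially, since both increments are then $0$.
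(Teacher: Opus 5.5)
Your proposal is correct and matches the paper's proof essentially step for step. It uses the same concave-over-modular lemma (monotonicity from non-negative weights, submodularity from non-increasing increments $z\mapsto\phi(z+w)-\phi(z)$), the same shifted functions $\phi_m(z)=\psi(H_{0,m}+z)-\psi(H_{0,m})$, closure under finite sums and non-negative combinations, and $A(\varnothing)=\lambda^{(t)}\sum_m\phi_m(0)=0$; your remark that the increment property follows from the three-chord inequality without any differentiability, which covers $\sqrt{z}$ at $0$, is a detail the paper leaves implicit.
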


\begin{corollary}[Greedy near-optimality]
\label{cor:greedy}
As $A$ is monotone submodular, normalized, and non-negative, the greedy rule of Algorithm~\ref{alg:sparcl} selecting a batch of size $B$ returns $\mathcal{S}_{\mathrm{greedy}}$ with $A(\mathcal{S}_{\mathrm{greedy}})\ge(1-1/e)\,A(\mathcal{S}^{\star})$, where $\mathcal{S}^{\star}$ is the optimal size-$B$ subset of the candidate set $\mathcal{U}'_t$ from which the batch is drawn~\cite{nemhauser1978analysis}. Since $H_0$ and $\lambda^{(t)}$ are fixed within a round, the bound applies round by round.
\end{corollary}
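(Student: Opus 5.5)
The plan is to verify the three hypotheses of the Nemhauser--Wolsey--Fisher theorem~\cite{nemhauser1978analysis} for $A$ restricted to the ground set $\mathcal{U}'_t$, and then invoke that theorem unchanged. The hypotheses are monotonicity, submodularity, and normalization $A(\varnothing)=0$. The first step is to fix the round $t$ and observe that all data entering $A$ are constants throughout the greedy loop of Algorithm~\ref{alg:sparcl}: $H_0$ from Eq.~\eqref{eq:h0}, the gate $\lambda^{(t)}$ from Eq.~\eqref{eq:gate}, the min--max normalized $U(x)\in[0,1]$, and the per-image distributions $\mathbf{q}(x)$. Only the coverage state $H(\mathcal{S})$ changes, and it is a deterministic function of $\mathcal{S}$. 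Hence $A$ is a well-defined, fixed set function on $2^{\mathcal{U}'_t}$.

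The second step is to apply Proposition~\ref{prop:submod} to this fixed function. This requires $\psi(z)=\sqrt{z}$ (or $\psi_\varepsilon$ from Remark~\ref{prop:eps}) to be concave and non-decreasing with $\psi(0)=0$, $q_m(x)\ge0$, $H_{0,m}\ge0$, and $\lambda^{(t)}\ge0$; each of these is immediate from the definitions. Proposition~\ref{prop:submod} then gives that $A$ is monotone submodular with $A(\varnothing)=0$. Monotonicity together with normalization yields $A(\mathcal{S})\ge A(\varnothing)=0$, so $A$ is also non-negative, as the corollary states.

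The third step is to check that Algorithm~\ref{alg:sparcl} is exactly the standard greedy rule. At each iteration it adds the $x\in\mathcal{U}'_t\setminus\mathcal{S}$ maximizing $\Delta(x\mid\mathcal{S})$, and Eq.~\eqref{eq:sparcl_obj} equals $A(\mathcal{S}\cup\{x\})-A(\mathcal{S})$ by direct expansion of Eq.~\eqref{eq:sparcl_value}. The algorithm stops after $B$ picks. One should confirm that $|\mathcal{U}'_t|=rB\ge B$, so a size-$B$ set exists. Then:
\begin{itemize}
\item monotonicity implies that an optimum under $|\mathcal{S}|\le B$ may be taken with $|\mathcal{S}^\star|=B$;
\item the classical telescoping argument, $A(\mathcal{S}^\star)-A(\mathcal{S}_k)\le B\,[A(\mathcal{S}_{k+1})-A(\mathcal{S}_k)]$, iterated $B$ times, gives $A(\mathcal{S}_{\mathrm{greedy}})\ge(1-(1-1/B)^B)A(\mathcal{S}^\star)\ge(1-1/e)A(\mathcal{S}^\star)$.
\end{itemize}
This telescoping argument can simply be cited rather than reproved.

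The main point needing care is the scope of the claim, not the inequality. The guarantee is relative to the optimum over the candidate set $\mathcal{U}'_t$, not over the full pool $\mathcal{U}_{t-1}$. It is also per round, because $H_0$ and $\lambda^{(t)}$ are recomputed from new annotations and a retrained model between rounds, so $A$ itself changes. I would state explicitly that no cross-round (adaptive) guarantee is claimed. Within a round, nothing random or data-dependent on $\mathcal{S}$ enters $A$, which is the precondition for the round-by-round application.
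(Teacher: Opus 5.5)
Your proposal is correct and follows the paper's own argument: Proposition~\ref{prop:submod} gives monotonicity, submodularity, and $A(\varnothing)=0$, and the $(1-1/e)$ bound then follows from the Nemhauser--Wolsey--Fisher theorem~\cite{nemhauser1978analysis}, applied to the fixed per-round function on $\mathcal{U}'_t$. Your extra checks are details the paper leaves implicit and do not change the route: that the argmax in Algorithm~\ref{alg:sparcl} is exactly the marginal gain of $A$, that $|\mathcal{U}'_t|\ge B$, and the telescoping sketch.
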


\noindent The guarantee concerns the acquisition objective $A$, not downstream Dice: it certifies that the linear-time greedy rule near-optimally solves an otherwise combinatorial batch-selection problem.

\begin{remark}[Regularized coverage removes the empty-bin singularity]
\label{prop:eps}
The default gain $\psi(z)=\sqrt{z}$ has $\psi'(0^{+})=+\infty$. Every gain actually evaluated is nevertheless bounded, $\psi(H+q)-\psi(H)\le\psi(q)\le1$ since $q_m(x)\le1$, and the $\varepsilon$-power gain $\psi_\varepsilon(z)=(z+\varepsilon)^{p}-\varepsilon^{p}$, $p\in(0,1)$, removes the singularity itself ($\psi_\varepsilon'(0)=p\,\varepsilon^{p-1}<\infty$) while remaining concave, non-decreasing, and $\psi_\varepsilon(0)=0$; it therefore leaves Proposition~\ref{prop:submod} and Corollary~\ref{cor:greedy} intact and recovers $\sqrt{z}$ as $\varepsilon\!\to\!0$ (Appendix~\ref{app:proofs}).
\end{remark}

% Crucially, the gate $\lambda^{(t)}$ and the coverage state $H_0$ depend only on quantities fixed before the greedy step (the candidate set and the labeled set), so each round maximizes a fixed monotone submodular function and the $(1-1/e)$ guarantee holds round by round regardless of how $\lambda^{(t)}$ and $H_0$ evolve across rounds. Furthermore, while the default gain $\psi(z)=\sqrt{z}$ has an unbounded derivative at zero ($\psi'(0^{+})=+\infty$), every marginal gain that is actually evaluated is bounded, $\psi(H+q)-\psi(H)\le\psi(q)\le1$ because $q_m(x)\le1$. A formal $\varepsilon$-power regularization $\psi_\varepsilon(z)$ that removes the singularity itself while preserving submodularity is detailed in Appendix~\ref{app:proofs}.

\subsection{Spatial Reliability Metrics}
\label{subsec:metrics}
% Aggregate metrics like Dice can mask regional biases, excelling on frequent central findings while failing in underrepresented areas. To quantify spatial uniformity, we evaluate performance across $M$ disjoint spatial bins $\{\Omega_m\}_{m=1}^M$. Because naive per-bin Dice is undefined in empty regions, we adopt a per-image, \emph{active-bin}, \emph{unweighted} formulation. While global and worst-bin Dice capture aggregate and lower-bound accuracy, they ignore spatial dispersion. Therefore, we introduce \textbf{SI} to measure cross-region disparity and \textbf{PC} to capture the breadth of improvement across AL rounds. We report these complementary metrics alongside Dice to ensure uniformly reliable performance across all regions.
We evaluate performance across the same $M$ bins $\{\Omega_m\}_{m=1}^M$ with a per-image, \emph{active-bin}, \emph{unweighted} formulation (naive per-bin Dice is undefined in empty regions). \textbf{SI} measures cross-region disparity at a round and \textbf{PC} the breadth of improvement between rounds; both are reported alongside Dice, which they complement rather than replace.

\begin{figure*}[t]
    \centering
    \includegraphics[width=\textwidth]{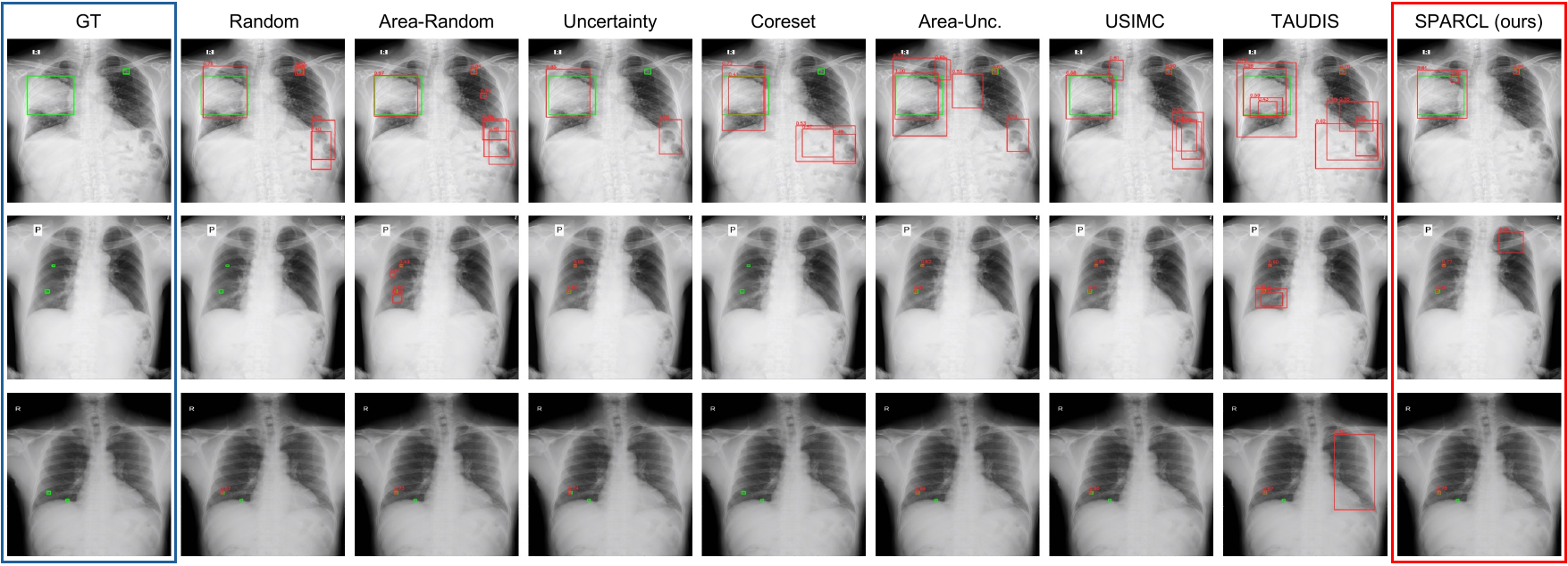}
    \caption{\textbf{Qualitative detection comparison on VinDr-CXR.} Ground-truth boxes (\textcolor{green}{green}) and each method's predictions (\textcolor{red}{red}; confidence $>0.5$, top-5 shown). On the same cases, \emph{SPARCL} localises the dispersed abnormalities more tightly and produces fewer spurious boxes than other baselines.}
    \label{fig:qual_det}
\end{figure*}

\begin{figure}[t]
    \centering
    \includegraphics[width=\columnwidth]{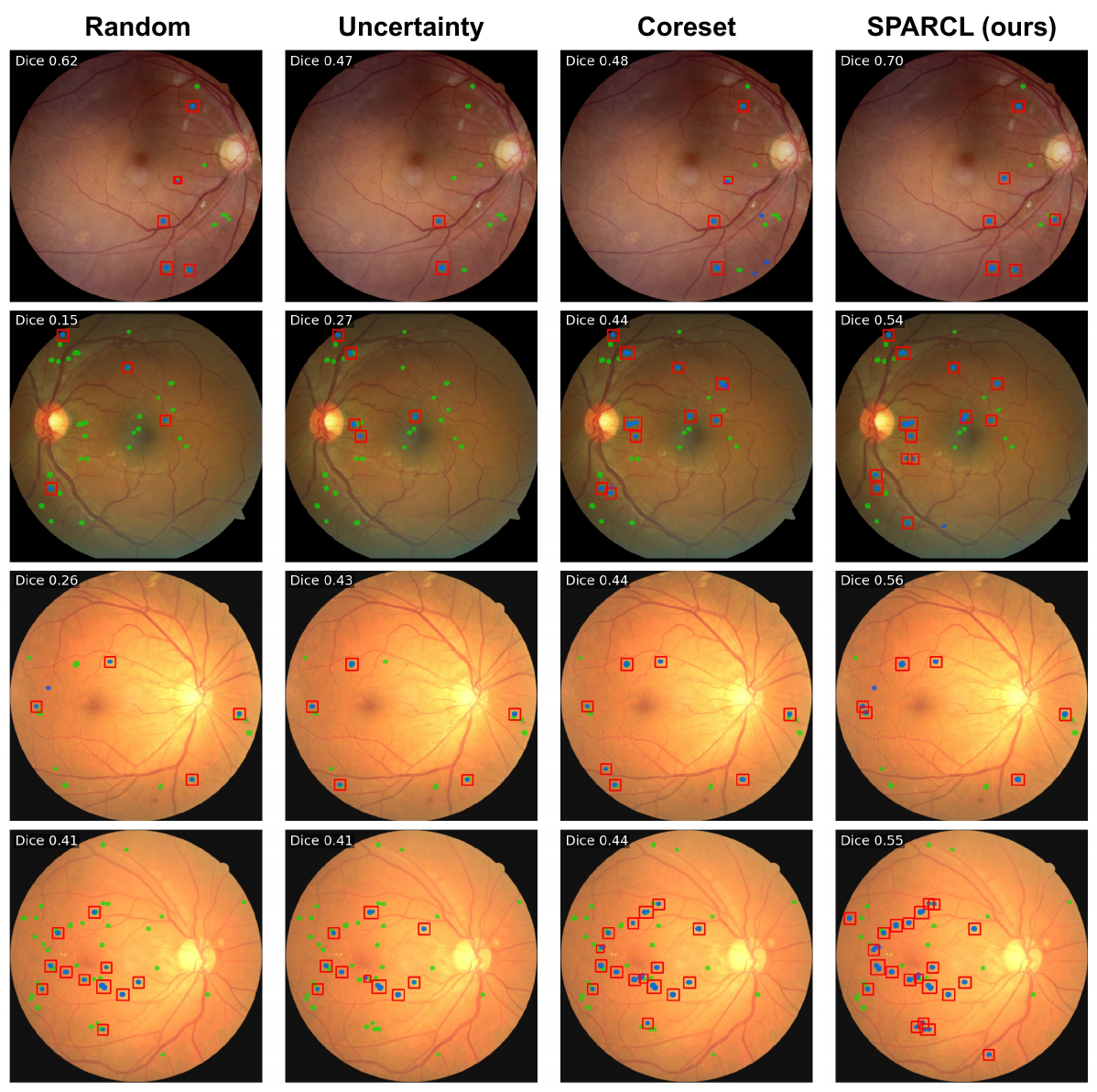}
    \caption{\textbf{Modality generalization to color fundus (DDR, microaneurysms).} Rows: five test cases; columns: acquisition methods. Ground truth in \textcolor{green!60!black}{green}, predictions in \textcolor{blue}{blue}, each true-positive overlap boxed in \textcolor{red}{red}. \emph{SPARCL} recovers more of the tiny scattered lesions (more red boxes, higher per-image Dice) than \textit{Random}, \textit{Uncertainty}, and \textit{Coreset}.}
    \label{fig:qual_fundus}
\end{figure}

\input{tables/table_template_table2_det}

\subsubsection{Per-bin Dice and active bins.} For image $x$ and bin $m$, let $P_m(x)$ and $G_m(x)$ denote the predicted and ground-truth foreground pixel sets, and define the per-bin Dice
\begin{equation}
D_m(x) = \frac{2\,|P_m(x)\cap G_m(x)|}{|P_m(x)|+|G_m(x)|}.
\end{equation}
The \emph{active} set is $\mathcal{A}(x)=\{\,m:\ |P_m(x)|+|G_m(x)|>0\,\}$. This excludes only the truly empty bins ($|G_m|=|P_m|=0$, where Dice is undefined) while \emph{including} false-positive bins ($|G_m|=0,\,|P_m|>0\Rightarrow D_m=0$) and missed-lesion bins ($|G_m|>0,\,|P_m|=0\Rightarrow D_m=0$). Hence scattered false positives in lesion-free regions correctly \emph{penalise} the metric rather than being discarded.

\subsubsection{Spatial Inconsistency (SI).} Let $\overline{D}(x)=\operatorname{mean}_{m\in\mathcal{A}(x)} D_m(x)$. SI is the active-bin coefficient of variation:
\begin{equation}
\mathrm{SI}(x)=
\begin{cases}
\text{n/a}, & |\mathcal{A}(x)|<2,\\[3pt]
0, & |\mathcal{A}(x)|\ge2,\ \overline{D}(x)=0,\\[3pt]
\dfrac{\operatorname{std}_{\,m\in\mathcal{A}(x)} D_m(x)}{\overline{D}(x)}, & |\mathcal{A}(x)|\ge2,\ \overline{D}(x)>0.
\end{cases}
\label{eq:si}
\end{equation}

SI is deliberately \emph{unweighted} to prevent high-mass regions from dominating and obscuring rare-region blind spots. Because $\mathrm{SI}=0$ for both uniformly strong and weak predictions, we always report the triple $\big(\text{overall Dice},\ \overline{D},\ \mathrm{SI}\big).$

\subsubsection{Performance Coverage (PC).} 
To quantify broad improvement between rounds, we average a concave gain ($\psi(z)=\sqrt{z}$) over ground-truth active bins ($\vert{}G_m\vert{}>0$):
\begin{equation}
\begin{aligned}
\mathrm{PC}(t)&=\frac{1}{|\mathcal{A}|}\sum_{m\in\mathcal{A}}\psi\!\big(\Delta D_m(t)\big),\\
\Delta D_m(t)&=\max\!\big(D_m^{\text{post}}(t)-D_m^{\text{pre}}(t),\,0\big),
\end{aligned}
\end{equation}

The concave gain structurally rewards uniform spatial improvements over isolated local gains. Although $\psi$ resembles the acquisition gain, PC evaluates held-out per-bin Dice \emph{improvement}, entirely independent of the pool uncertainty mass optimized by SPARCL.

%% file: tables/alg_1.tex
% \begin{algorithm}[t]
% \footnotesize % 지면 절약을 위해 폰트 크기 조절
% \caption{SPARCL Acquisition Routine}
% \label{alg:sparcl}
% \KwIn{Model $f_{\theta}$; Pool $\mathcal{U}_t$; Budget $b$; Prefilter ratio $r$; Spatial weight $\lambda_{max}$}
% \KwOut{Selected batch $\mathcal{B}_t$}

% % Step 1: Global Stats
% \textbf{1. Global Distribution Analysis:}\;
% \ForEach{$x\in \mathcal{U}_t$}{
%     Estimate entropy map $H(x)$ and spatial histogram $\mathbf{h}(x)$\;
%     $U(x) \leftarrow \text{mean}(H(x))$ \tcp*{Image-level uncertainty}
% }
% $\lambda^{(t)} \leftarrow \lambda_{max} \cdot \frac{\mathcal{H}(\sum_{x\in \mathcal{U}_t} \mathbf{h}(x))}{\log M}$ \tcp*{Dynamic gate via pool entropy}

% % Step 2: Prefiltering
% \textbf{2. Candidate Prefiltering (Optional: for ensure computational scalability):}\;
% $\mathcal{U}' \leftarrow \text{Top } r\text{\% of } \mathcal{U}_t \text{ ranked by } U(x)$\;

% % Step 3: Greedy Selection
% \textbf{3. Adaptive Greedy Selection:}\;
% $\mathcal{S} \leftarrow \varnothing, \mathbf{H}_{\mathcal{S}} \leftarrow \mathbf{0}$\;
% \While{$|\mathcal{S}| < b$}{
%     $x^* \leftarrow \arg\max_{x \in \mathcal{U}' \setminus \mathcal{S}} [U(x) + \lambda^{(t)} \sum_{k=1}^M (\sqrt{H_{\mathcal{S},k} + h_k(x)} - \sqrt{H_{\mathcal{S},k}})]$\;
%     $\mathcal{S} \leftarrow \mathcal{S} \cup \{x^*\}, \mathbf{H}_{\mathcal{S}} \leftarrow \mathbf{H}_{\mathcal{S}} + \mathbf{h}(x^*)$\;
% }
% \Return $\mathcal{B}_t = \mathcal{S}$
% \end{algorithm}

\begin{algorithm2e}[t]
\footnotesize
\DontPrintSemicolon
\LinesNotNumbered
\setlength{\algomargin}{1.6em}
\SetInd{0.4em}{0.6em}
\caption{SPARCL Sample Acquisition (round $t$)}
\label{alg:sparcl}
\KwIn{Segmenter $f_{\theta}$; pool $\mathcal{U}_{t-1}$; labeled set $\mathcal{L}_{t-1}$ with masks $\{y_x\}$; budget $B$; $M$ bins $\{\Omega_m\}$; candidate ratio $r$; gate ceiling $\lambda_{\max}$; concave $\psi(z)=\sqrt{z}$}
\KwOut{Batch $\mathcal{B}_t$}
% \tcp{1. Per-image uncertainty and its spatial distribution}
\ForEach{$x\in \mathcal{U}_{t-1}$}{
    $H(i)\leftarrow$ pixel entropy of $f_\theta(x)$;\ \ $U(x)\leftarrow\tfrac{1}{|\Omega|}\sum_{i} H(i)$;\ \ $q_m(x)\leftarrow\sum_{i\in\Omega_m} H(i)\big/\sum_{i\in\Omega} H(i)$\;
}
% \tcp{2. Candidate set (uncertainty floor)}
$\mathcal{U}'_t\leftarrow$ top-$rB$ of $\mathcal{U}_{t-1}$ by $U(x)$;\ \ $U(x)\leftarrow\big(U(x)-\min_{\mathcal{U}'_t}U\big)\big/\big(\max_{\mathcal{U}'_t}U-\min_{\mathcal{U}'_t}U\big)$\;
% \tcp{3. Positional-diversity gate (closes only if uncertainty sits in a few bins for all candidates)}
$\bar q_m\leftarrow\tfrac{1}{|\mathcal{U}'_t|}\sum_{x\in\mathcal{U}'_t} q_m(x)$;\ \ $\lambda^{(t)}\leftarrow\lambda_{\max}\,\mathcal{H}(\bar{\mathbf q})/\log M$\;
% \tcp{4. Cumulative coverage state from already-labeled lesions}
$g_m(x)\leftarrow|\{i\in\Omega_m: y_x(i)=1\}|\big/|\{i: y_x(i)=1\}|$ for $x\in\mathcal{L}^{+}_{t-1}$;\ \ $H_{\mathcal{S},m}\leftarrow\tfrac{B}{|\mathcal{L}^{+}_{t-1}|}\sum_{x\in\mathcal{L}^{+}_{t-1}} g_m(x)$;\ \ $\mathcal{S}\leftarrow\varnothing$\;
% \tcp{5. Greedy batch construction}
\While{$|\mathcal{S}| < B$}{
    $x^{\star}\leftarrow\arg\max_{x\in\mathcal{U}'_t\setminus\mathcal{S}}\ U(x)+\lambda^{(t)}\sum_{m}\big[\psi(H_{\mathcal{S},m}\!+\!q_m(x))-\psi(H_{\mathcal{S},m})\big]$\;
    $\mathcal{S}\leftarrow\mathcal{S}\cup\{x^{\star}\}$;\ \ $H_{\mathcal{S},m}\mathrel{+}= q_m(x^{\star})$\;
}
\Return $\mathcal{B}_t=\mathcal{S}$ \tcp*{greedy on monotone submodular $A$: $(1{-}1/e)$-optimal (Cor.~\ref{cor:greedy})}
\tcp{Detection: $q_m(x)\!\propto\!\sum_{o\in\mathcal{B}_x} s_o\mathcal{H}(s_o)\,\mathrm{IoU}(o,\Omega_m)$, $U(x)\!\leftarrow\!\sum_{o} s_o\mathcal{H}(s_o)$ (box score $s_o$), $g_m(x)\!\propto$ ground-truth box area in $\Omega_m$; steps 2--5 unchanged}
\end{algorithm2e}

%% file: tables/table_template_table1_seg.tex
\begin{table*}[t]
\centering
\caption{
Label efficiency on segmentation benchmarks, mean $\pm$ std over five shared random seeds. Dice@20\% is the per-image Dice ($\times$100) at a 20\% annotation budget, Random held-out split; AULC is the normalized area under the Dice-vs-budget curve. Budget@95\% is the minimum budget to reach 95\% of full-data Dice (in-house 46.8~$\pm$~1.7, SIIM-ACR 61.8~$\pm$~0.4 at 100\% budget). Bold = best in column, Underline = second best in column.
}
\label{tab:segmentation_sota}
\scriptsize
\setlength{\tabcolsep}{10pt}
\renewcommand{\arraystretch}{1.15}
\resizebox{0.8\textwidth}{!}{
\begin{tabular}{lccc|ccc}
\toprule
& \multicolumn{3}{c|}{\textbf{In-house Calcification}}
& \multicolumn{3}{c}{\textbf{SIIM-ACR}} \\
\cmidrule(lr){2-4}
\cmidrule(lr){5-7}
\textbf{Method}
& \textbf{Dice@20\%\,$\uparrow$} & \textbf{AULC\,$\uparrow$} & \textbf{Budget@95\%\,$\downarrow$}
& \textbf{Dice@20\%\,$\uparrow$} & \textbf{AULC\,$\uparrow$} & \textbf{Budget@95\%\,$\downarrow$} \\
\midrule
Random
& 37.3 $\pm$ 1.2 & 42.7 $\pm$ 1.3 & 29.7 $\pm$ 3.3\%
& 52.9 $\pm$ 2.2 & 56.9 $\pm$ 1.5 & 25.5 $\pm$ 3.2\% \\

Area-Random
& 38.3 $\pm$ 0.4 & 43.0 $\pm$ 1.2 & 29.9 $\pm$ 1.6\%
& \underline{54.1 $\pm$ 2.1} & \underline{59.3 $\pm$ 2.3} & 24.7 $\pm$ 2.1\% \\

Uncertainty
& 41.2 $\pm$ 1.1 & \underline{43.7 $\pm$ 0.5} & \underline{29.3 $\pm$ 0.9\%}
& 53.6 $\pm$ 1.8 & 58.9 $\pm$ 1.6 & \underline{22.4 $\pm$ 1.1\%} \\

Area-Uncertainty
& 41.5 $\pm$ 0.2 & 43.4 $\pm$ 0.8 & 29.5 $\pm$ 0.7\%
& 53.8 $\pm$ 0.5 & 58.1 $\pm$ 2.2 & 24.2 $\pm$ 1.6\% \\

Coreset
& 40.9 $\pm$ 2.2 & 43.4 $\pm$ 0.9 & 29.9 $\pm$ 2.2\%
& 52.7 $\pm$ 1.0 & 57.5 $\pm$ 0.7 & 31.3 $\pm$ 2.8\% \\

Diversity
& 38.4 $\pm$ 2.1 & 42.3 $\pm$ 1.1 & 57.2 $\pm$ 2.9\%
& 51.1 $\pm$ 1.4 & 56.8 $\pm$ 2.1 & 29.5 $\pm$ 3.1\% \\

USIMC
& 41.0 $\pm$ 1.8 & 42.6 $\pm$ 0.7 & 60.1 $\pm$ 1.8\%
& 52.6 $\pm$ 1.7 & 56.9 $\pm$ 1.5 & 25.2 $\pm$ 1.2\% \\

TAUDIS
& 37.6 $\pm$ 1.1 & 42.7 $\pm$ 0.6 & 41.4 $\pm$ 1.6\%
& 51.7 $\pm$ 2.1 & 55.8 $\pm$ 1.4 & 34.6 $\pm$ 1.9\% \\

Loss-Prediction
& \underline{42.5 $\pm$ 1.2} & 42.9 $\pm$ 0.4 & 36.4 $\pm$ 2.1\%
& 51.3 $\pm$ 2.0 & 55.9 $\pm$ 1.8 & 28.1 $\pm$ 1.8\% \\

PAAL
& 35.4 $\pm$ 1.9 & 41.9 $\pm$ 0.3 & 67.2 $\pm$ 2.4\%
& 49.4 $\pm$ 2.2 & 55.0 $\pm$ 2.2 & 26.8 $\pm$ 2.2\% \\

\rowcolor{Gray} \textbf{SPARCL (Ours)}
& \textbf{43.4 $\pm$ 1.3} & \textbf{45.1 $\pm$ 0.8} & \textbf{24.4 $\pm$ 1.4\%}
& \textbf{55.1 $\pm$ 1.4} & \textbf{60.6 $\pm$ 1.7} & \textbf{21.2 $\pm$ 2.1\%} \\
\bottomrule
\end{tabular}
}
\end{table*}

%% file: tables/table_template_table3_spatial_reliability_metric.tex
\begin{table*}[t]
\centering
% \caption{
% Spatial reliability at a fixed 20\% annotation budget on the in-house
% calcification dataset. Global/Mean Bin Dice and Anat-Zone Worst are pooled micro-Dice over the 7 lung zones; Dead-Bin\% is the share of GT-bearing $7{\times}7$ bins with zero pooled overlap; SI is the revised per-image spatial-inequality index ($\downarrow$) and PC the performance coverage ($\uparrow$). Results are mean $\pm$ std over available five shared seeds.}
\caption{
Spatial reliability at a fixed 20\% annotation budget on the in-house calcification dataset. Global Dice is the pooled micro-Dice ($\times$100) over the test set; Mean Bin Dice is the mean pooled Dice over the $7{\times}7$ grid bins, Random held-out split; Anat-Zone Worst is the lowest pooled Dice over the seven lung zones (Fig.~\ref{fig:anatomy_bins}); Dead-Bin\% is the share of GT-bearing $7{\times}7$ bins with zero pooled overlap. SI is the Spatial Inconsistency of Eq.~\eqref{eq:si} ($\downarrow$) and PC the Performance Coverage ($\uparrow$). Mean $\pm$ std over five shared seeds; ties are broken by the lower std. Bold = best in column, Underline = second best.
}
\label{tab:spatial_reliability}
\scriptsize
\setlength{\tabcolsep}{10pt}
\renewcommand{\arraystretch}{1.15}
\begin{tabular}{lcccccc}
\toprule
\textbf{Method}
& \textbf{Global Dice\,$\uparrow$}
& \textbf{Mean Bin Dice\,$\uparrow$}
& \textbf{Dead-Bin \%\,$\downarrow$}
& \textbf{Anat-Zone Worst\,$\uparrow$}
& \textbf{SI\,$\downarrow$}
& \textbf{PC\,$\uparrow$} \\
\midrule

Random
& 28.8 $\pm$ 3.1 & 22.0 $\pm$ 2.1
& 23.0 $\pm$ 3.3 & 11.3 $\pm$ 3.3
& 0.931 $\pm$ 0.181 & 0.138 $\pm$ 0.039 \\

Area-Random
& 29.2 $\pm$ 2.8 & 27.3 $\pm$ 3.1
& 17.6 $\pm$ 2.0 & 10.6 $\pm$ 2.7
& 0.847 $\pm$ 0.109 & 0.137 $\pm$ 0.042 \\

Uncertainty
& \underline{39.8 $\pm$ 1.1} & 31.0 $\pm$ 2.0
& 16.0 $\pm$ 1.2 & \underline{16.6 $\pm$ 2.9}
& 0.750 $\pm$ 0.045 & 0.111 $\pm$ 0.035 \\

Area-Uncertainty
& 38.1 $\pm$ 1.2 & \underline{33.7 $\pm$ 2.4}
& 12.0 $\pm$ 1.5 & 16.4 $\pm$ 3.3
& 0.879 $\pm$ 0.119 & 0.137 $\pm$ 0.022 \\

Coreset
& 36.9 $\pm$ 1.3 & 29.4 $\pm$ 2.3
& 16.1 $\pm$ 2.5 & 14.5 $\pm$ 2.9
& 0.751 $\pm$ 0.023 & 0.098 $\pm$ 0.009 \\

Diversity
& 31.2 $\pm$ 3.1 & 28.7 $\pm$ 3.4
& 18.8 $\pm$ 3.2 & 15.7 $\pm$ 4.6
& \underline{0.655 $\pm$ 0.071} & 0.133 $\pm$ 0.042 \\

USIMC
& 38.9 $\pm$ 2.4 & 29.0 $\pm$ 3.3
& 19.4 $\pm$ 1.9 & 15.0 $\pm$ 3.8
& 0.796 $\pm$ 0.108 & 0.128 $\pm$ 0.050 \\

TAUDIS
& 27.5 $\pm$ 3.6 & 25.0 $\pm$ 3.0
& \underline{10.8 $\pm$ 2.9} & 12.5 $\pm$ 2.5
& 0.735 $\pm$ 0.082 & 0.128 $\pm$ 0.034 \\

Loss-Prediction
& 31.2 $\pm$ 2.9 & 24.5 $\pm$ 4.0
& 23.0 $\pm$ 2.7 & 14.5 $\pm$ 4.2
& 0.942 $\pm$ 0.137 & \underline{0.146 $\pm$ 0.013} \\

PAAL
& 30.7 $\pm$ 1.2 & 24.3 $\pm$ 2.5
& 20.0 $\pm$ 1.1 & 6.6 $\pm$ 3.7
& 0.950 $\pm$ 0.172 & 0.120 $\pm$ 0.047 \\

\rowcolor{Gray} \textbf{SPARCL (Ours)}
& \textbf{40.5 $\pm$ 1.1}
& \textbf{36.1 $\pm$ 2.3}
& \textbf{9.3 $\pm$ 1.2}
& \textbf{19.2 $\pm$ 3.0}
& \textbf{0.571 $\pm$ 0.038}
& \textbf{0.148 $\pm$ 0.009} \\
\bottomrule
\end{tabular}

\end{table*}

%% file: tables/table_template_table2_det.tex
\begin{table*}[t]
\centering
\caption{
Label efficiency and native object-detection performance on public detection benchmarks, mean $\pm$ std over five shared random seeds. mAP$^{\dagger}$, AP$_{50}$$^{\dagger}$, and FROC$^{\dagger}$ are at a fixed $20\%$ budget; AULC is the normalized area under the mAP-vs-budget curve; Budget@95\% is the minimum budget to reach $95\%$ of full-data mAP (VinDr-CXR 9.5~$\pm$~0.1 mAP / 25.8~$\pm$~0.2 AP$_{50}$; ChestX-Det10 13.2~$\pm$~0.4 mAP / 36.5~$\pm$~0.3 AP$_{50}$ at 100\% budget). Bold = best in column, Underline = second best in column.
}
\label{tab:detection_sota}
\scriptsize
\setlength{\tabcolsep}{5pt}
\renewcommand{\arraystretch}{1.15}
\begin{tabular}{lccccc|ccccc}
\toprule
& \multicolumn{5}{c|}{\textbf{VinDr-CXR}}
& \multicolumn{5}{c}{\textbf{ChestX-Det10}} \\
\cmidrule(lr){2-6}
\cmidrule(lr){7-11}
\textbf{Method}
& \textbf{mAP$^{\dagger}$\,$\uparrow$} & \textbf{AP$_{50}$$^{\dagger}$\,$\uparrow$} & \textbf{FROC$^{\dagger}$\,$\uparrow$} & \textbf{AULC\,$\uparrow$} & \textbf{Budget@95\%\,$\downarrow$}
& \textbf{mAP$^{\dagger}$\,$\uparrow$} & \textbf{AP$_{50}$$^{\dagger}$\,$\uparrow$} & \textbf{FROC$^{\dagger}$\,$\uparrow$} & \textbf{AULC\,$\uparrow$} & \textbf{Budget@95\%\,$\downarrow$} \\
\midrule

Random
& 7.1 $\pm$ 0.3 & 19.9 $\pm$ 0.6 & 25.0 $\pm$ 0.5 & 9.3 $\pm$ 0.3 & 49 $\pm$ 8\%
& 8.5 $\pm$ 0.4 & 25.8 $\pm$ 0.5 & \underline{26.5 $\pm$ 0.3} & 10.5 $\pm$ 0.2 & 75 $\pm$ 4\% \\

Area-Random
& \underline{7.3 $\pm$ 0.2} & \underline{20.7 $\pm$ 0.2} & \underline{25.7 $\pm$ 0.7} & \underline{9.6 $\pm$ 0.1} & \underline{42 $\pm$ 4\%}
& 8.2 $\pm$ 0.2 & \underline{26.1 $\pm$ 0.2} & 26.4 $\pm$ 0.2 & 9.5 $\pm$ 0.1 & 82 $\pm$ 1\% \\

Uncertainty
& 6.7 $\pm$ 0.4 & 17.5 $\pm$ 1.1 & 22.9 $\pm$ 0.5 & 8.6 $\pm$ 0.1 & 48 $\pm$ 2\%
& 7.6 $\pm$ 0.3 & 24.4 $\pm$ 0.6 & 25.4 $\pm$ 0.5 & 9.9 $\pm$ 0.1 & 89 $\pm$ 4\% \\

Area-Uncertainty
& 7.1 $\pm$ 0.1 & 20.4 $\pm$ 1.0 & 24.8 $\pm$ 0.7 & 9.3 $\pm$ 0.3 & 47 $\pm$ 6\%
& 7.3 $\pm$ 0.3 & 23.3 $\pm$ 0.6 & 24.2 $\pm$ 0.5 & 10.0 $\pm$ 0.2 & 80 $\pm$ 2\% \\

Coreset
& 6.9 $\pm$ 0.3 & 19.7 $\pm$ 0.7 & 25.5 $\pm$ 0.9 & 9.5 $\pm$ 0.3 & 45 $\pm$ 4\%
& \underline{8.6 $\pm$ 0.2} & 25.7 $\pm$ 0.3 & 26.2 $\pm$ 0.2 & \underline{10.6 $\pm$ 0.1} & \underline{73 $\pm$ 3\%} \\

Diversity
& 6.9 $\pm$ 0.2 & 19.1 $\pm$ 0.9 & 25.1 $\pm$ 1.0 & 9.1 $\pm$ 0.1 & 43 $\pm$ 2\%
& 8.5 $\pm$ 0.1 & 26.0 $\pm$ 0.9 & 25.7 $\pm$ 0.7 & 9.6 $\pm$ 0.2 & 81 $\pm$ 1\% \\

TAUDIS
& 6.4 $\pm$ 0.2 & 18.4 $\pm$ 0.7 & 23.7 $\pm$ 0.9 & 9.3 $\pm$ 0.3 & 46 $\pm$ 2\%
& 8.5 $\pm$ 0.3 & 25.2 $\pm$ 0.2 & 25.9 $\pm$ 0.3 & 10.4 $\pm$ 0.1 & 79 $\pm$ 2\% \\

USIMC
& 6.4 $\pm$ 0.2 & 18.4 $\pm$ 1.3 & 24.3 $\pm$ 1.5 & 8.8 $\pm$ 0.2 & 58 $\pm$ 3\%
& 7.7 $\pm$ 0.3 & 24.3 $\pm$ 0.6 & 25.1 $\pm$ 0.3 & 10.0 $\pm$ 0.2 & 81 $\pm$ 3\% \\

\rowcolor{Gray} \textbf{SPARCL (Ours)}
& \textbf{7.7 $\pm$ 0.1} & \textbf{21.7 $\pm$ 0.8} & \textbf{27.1 $\pm$ 0.8} & \textbf{9.8 $\pm$ 0.2} & \textbf{40 $\pm$ 3\%}
& \textbf{8.9 $\pm$ 0.1} & \textbf{26.2 $\pm$ 0.5} & \textbf{26.6 $\pm$ 0.4} & \textbf{10.7 $\pm$ 0.1} & \textbf{71 $\pm$ 2\%} \\
\bottomrule
\end{tabular}

\vspace{2pt}
\end{table*}

%% file: contents/3_experiments.tex
\section{Experiments}
\label{sec:experiments}

% \begin{figure}[h!]
%     \centering
%     \includegraphics[width=\columnwidth]{figures/qual_detection_vindr.pdf}
%     \caption{\textbf{Qualitative detection comparison on VinDr-CXR.} Ground-truth boxes (\textcolor{green}{green}) and each method's predictions (\textcolor{red}{red, dashed}; confidence $>0.5$, top-5 shown). On the same cases, \emph{SPARCL} localises the dispersed abnormalities more tightly and produces fewer spurious boxes than \textit{Random}, \textit{Coreset}, and \textit{Uncertainty}.}
%     \label{fig:qual_det}
% \end{figure}

\subsection{Datasets and Preprocessing}
To validate adaptability across distinct spatial regimes, we evaluate \emph{SPARCL} on four datasets:
\textbf{(1) In-house Calcification (Seg., Mixed):} 2.8k private CXRs with two expert radiologists annotations for \textit{Calcified Nodules} (900, diffuse) and \textit{Aortic Calcification} (900, focal).
\textbf{(2) VinDr-CXR~\cite{nguyen2022vindr} (Det., Diffuse):} 15k CXRs with bounding boxes for \textit{Pulmonary Nodules} and other scattered abnormalities.
\textbf{(3) SIIM-ACR~\cite{filice2020crowdsourcing} (Seg., Focal):} 10.7k CXRs for \textit{Pneumothorax} targeting the pleural boundary.
\textbf{(4) ChestX-Det10~\cite{liu2020chestx} (Det., Mixed):} A 3.5k instance-level subset of NIH ChestX-ray14 covering 10 disease categories. 

To probe generalization beyond chest radiography, we additionally use \textbf{(5) DDR~\cite{li2019diagnostic} (Seg., Diffuse):} color fundus photographs for \textit{microaneurysm} segmentation, a strongly diffuse regime with tiny, scattered retinal lesions. DDR is used only for the cross-modality generalization study (Table~\ref{tab:fundus_generalization}), not for the main CXR comparisons.

% \subsection{Implementation Details}
% \subsubsection{Training \& AL Protocol.} All experiments are implemented in PyTorch using single NVIDIA V100 (32GB) GPUs.
% For segmentation, we employ a U-Net~\cite{ronneberger2015u} (EfficientNet-B4 encoder, scSE decoder) optimized via AdamW (LR=$3\text{e-}4$) with Dice+BCE loss.
% For detection, we use Faster R-CNN (ResNet-50) with focal and smooth L1 losses.
% Models are trained for up to 150 epochs (batch size 16) with early stopping (patience 10, $\delta=0.001$).
% Default AL parameters are $K=L=7$ ($7\times7$ grid) and $\lambda_{\max}=1.0$.
% In each AL round, we expand the labeled pool by acquiring an additional 1\% of the total dataset. To isolate the effect of the acquisition rule, all strategies share identical network architectures, optimizers, data augmentations, training schedules, and early-stopping criteria; only the sample-selection rule differs. For each random seed, every method starts from the \emph{same} initial labeled set and unlabeled pool and acquires the \emph{same} number of samples at every AL round, so any performance difference is attributable to acquisition alone. All methods are re-evaluated with the same set of shared random seeds, and unless otherwise noted results are reported as mean $\pm$ standard deviation over five shared seeds. Splits are patient-level to avoid leakage between the labeled pool and the held-out evaluation set.

\subsection{Implementation Details}
\subsubsection{Training and AL Protocol}
All experiments are implemented in PyTorch on a single NVIDIA V100 (32\,GB) GPU. To isolate the effect of the acquisition rule, all AL strategies share identical architectures (U-Net~\cite{ronneberger2015u} with an EfficientNet-B4 encoder and scSE decoder trained with a Dice+BCE loss for segmentation; Faster R-CNN with a ResNet-50 backbone trained with focal and smooth-$L_1$ losses for detection), optimizers (AdamW, LR=$3\text{e-}4$), augmentations, 150 epochs (batch size 16), and early-stopping (patience 10, $\delta=0.001$) criteria. In each AL round, we expand the labeled pool by 1\% of the total dataset. Default \emph{SPARCL} parameters are a $7 \times 7$ grid ($K=L=7$), $\lambda_{\max}=1.0$, and candidate ratio $r=5$; all three were selected on the validation split of the in-house cohort. Every method starts from the same initial labeled set and acquires the exact same number of samples per round. Results are reported as mean $\pm$ standard deviation over five shared random seeds using patient-level splits to avoid data leakage.

% \subsubsection{Evaluation Split Protocols.} We support three test split modes:
% (1) \textit{Random} (default): a patient-level random split that ignores lesion location, so the test set follows the natural spatial prevalence of the pathology.
% (2) \textit{Spatial-Equal}: the test set is assembled so that each of the $K\times L$ bins contributes an equal number of lesion-bearing images ($N_{\text{test}}/(K\times L)$ per bin), giving every region equal weight regardless of prevalence; ground-truth masks are used only to assign test images to bins.
% (3) \textit{Spatial-Dynamic}: Proportional to each bin's lesion prevalence, $N_{\text{test}}\cdot|\Omega_m|/\sum_j|\Omega_j|$, so the evaluation distribution mirrors the natural spatial distribution of the pathology.

\subsubsection{Evaluation Split Protocols.} The held-out set is drawn in one of three modes, using ground-truth masks only to assign images to bins: \textit{Random} (patient-level, ignores lesion location, follows natural spatial prevalence); \textit{Spatial-Equal} (each of the $K\times L$ bins contributes $N_{\text{test}}/(K\times L)$ lesion-bearing images, so every region is weighted equally regardless of prevalence); and \textit{Spatial-Dynamic} (bin quotas $N_{\text{test}}\cdot|\Omega_m|/\sum_j|\Omega_j|$ proportional to lesion prevalence).

\subsubsection{Generalization to Object Detection.}
To adapt \emph{SPARCL} for detection, we reformulate the spatial histogram using predicted boxes $\mathcal{B}_x$: $h_m(x) = \sum_{b \in \mathcal{B}_x} s_b \mathcal{H}(s_b) \cdot \text{IoU}(b, \text{bin}_m)$, where $s_b$ is the confidence score and $\mathcal{H}$ is binary entropy.
The image-level uncertainty is correspondingly the aggregate box uncertainty $U(x) = \sum_{b \in \mathcal{B}_x} s_b \mathcal{H}(s_b)$, which replaces the mean pixel entropy used for segmentation.
This $s_b \mathcal{H}(s_b)$ weighting filters background noise by prioritizing object-like (high $s_b$) yet uncertain regions. The histogram is normalized to $\mathbf{q}(x)$ exactly as for segmentation; for the cumulative coverage state, $g_m(x)$ is the fraction of ground-truth box area of a labeled image that falls in bin $m$. The candidate set, gate, coverage state, and greedy step are otherwise unchanged.

\subsection{Baselines}
We compare \emph{\textbf{SPARCL}} against 10 strategies across four categories. \textbf{Random:} \textbf{(1) \textit{Random}} (uniform pool selection) and \textbf{(2) \textit{Area-Random}} (spatially-stratified).
\textbf{Uncertainty:} \textbf{(3) \textit{Uncertainty}}~\cite{yang2017suggestive} (standard pixel/box entropy) and \textbf{(4) \textit{Area-Uncertainty}} (entropy with spatial stratification).
\textbf{Diversity:} \textbf{(5) \textit{Coreset}}~\cite{sener2017active} (global k-center feature coverage over the entire labeled set) and \textbf{(6) \textit{Diversity}} (intra-batch farthest-point feature sampling).
\textbf{Hybrid / learned:} \textbf{(7) \textit{TAUDIS}}~\cite{yu2023two} (uncertainty filtering + diversity), \textbf{(8) \textit{USIMC}}~\cite{follmer2024active} (uncertainty-aware submodular mutual information), \textbf{(9) \textit{Loss-Prediction}}~\cite{yoo2019learning} (a module predicts each sample's target loss), and \textbf{(10) \textit{PAAL}}~\cite{shi2024predictive} (an auxiliary network predicts each sample's segmentation accuracy and the lowest-predicted-accuracy samples are queried). Note that \emph{PAAL} and \emph{Loss-Prediction} are designed specifically for segmentation and are therefore applied only to the segmentation benchmarks.

%% file: contents/4_results.tex
\input{tables/table_fundus}

% \begin{figure}[t]
%     \centering
%     \includegraphics[width=\columnwidth]{figures/qual_inhouse_advantage.pdf}
%     \caption{\textbf{Qualitative segmentation on the in-house calcified-nodule cohort (diffuse).} Ground-truth contour (green) and predicted mask (red overlay) with per-image Dice. On these scattered calcifications, \textit{Random} and \textit{Uncertainty} often miss the lesion (Dice~$\approx0$) while \emph{SPARCL} recovers it; removing the entropy gate (\emph{SPARCL w/o gate}) degrades the prediction, showing the gate's importance in the diffuse regime.}
%     \label{fig:qual_inhouse}
% \end{figure}
\begin{figure*}[t]
    \centering
    \includegraphics[width=\textwidth]{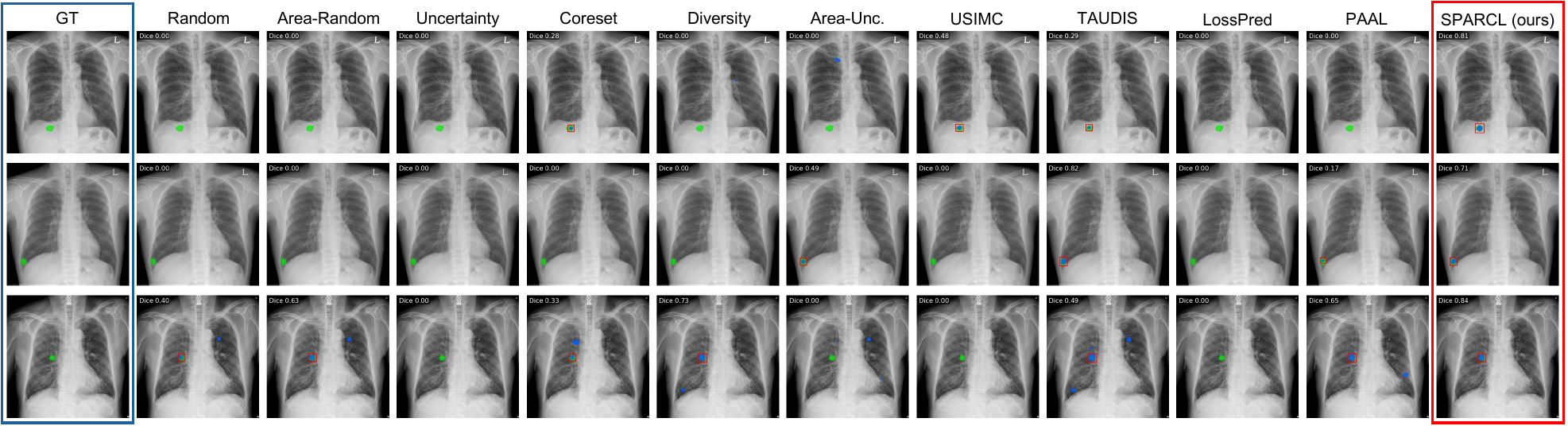}
    \caption{\textbf{Qualitative segmentation on the in-house calcified-nodule cohort.} Ground-truth contours in \textcolor{green!60!black}{green}, predictions in \textcolor{blue}{blue}, and true-positive overlaps boxed in \textcolor{red}{red}, shown with per-image Dice. Baselines often miss these scattered calcifications (Dice~$\approx0$) while \emph{SPARCL} recovers them.}
    \label{fig:qual_inhouse}
\end{figure*}

\subsection{Quantitative Analysis}

% Fig.~\ref{fig:alc_budget_curves} shows the learning dynamics across annotation budgets (10--100\%). \emph{SPARCL} demonstrates label efficiency and spatial reliability, effectively mitigating the trade-off between aggregate performance and anatomical fairness.

\subsubsection{Label Efficiency and Spatial Reliability}
At a restricted 20\% annotation budget, \emph{SPARCL} consistently outperforms all baselines. On the in-house dataset (Table~\ref{tab:segmentation_sota}), it achieves a Dice score of 43.4\%, surpassing \textit{Random} (37.3\%) and \textit{Uncertainty} sampling (41.2\%). Moreover, \textit{Uncertainty} sampling generates false positives in local blind spots, exhibiting limited spatial stability (Fig.~\ref{fig:selection_map}b). Regarding spatial safety (Table~\ref{tab:spatial_reliability}), \emph{SPARCL} maintains the lowest Spatial Inconsistency ($SI=0.571$) and the highest Performance Coverage ($PC=0.148$) at a 20\% budget. By consistently yielding the highest Anat-Zone Worst Dice (19.2) and the lowest Dead-Bin rate (9.3\%), \emph{SPARCL} prevents local overfitting and substantially reduces anatomical blind spots (Dead-Bin 9.3\% vs.\ 16.0\% for \textit{Uncertainty}).

% Dead-Bin % (↓ 낮을수록 좋음) : GT가 있는 7×7 칸 중 완전히 놓친(pooled Dice = 0) 칸의 비율 =  "병변 있는 영역 중 몇 %를 아예 0으로 놓쳤나." - 얼마나 많은 영역을 통째로 놓쳤나
% Anat-Zone Worst (↑ 높을수록 좋음): 7개 폐 해부학 zone(좌/우 × apex/mid/base + central) 중 가장 성능 나쁜 zone의 pooled Dice. = "최악의 해부학 영역도 얼마나 하나."

\subsubsection{Generalization Across Diverse Patterns}
Table~\ref{tab:segmentation_sota} and Table~\ref{tab:detection_sota} summarize efficiency across diverse public datasets. \emph{SPARCL} requires less data to reach 95\% of fully supervised performance (Budget@95\%). It reaches this milestone at only a 24\% budget on the in-house cohort and 21\% on the focal SIIM-ACR dataset. This extends to the highly dispersed VinDr-CXR (40\% budget) and instance-level ChestX-Det10 (71\% budget) detection tasks, i.e., across lesion distributions and task types.

\subsubsection{Object Detection.}
% Because \emph{SPARCL}'s acquisition relies only on a spatial uncertainty histogram, it transfers naturally to object detection by reformulating the per-bin mass from predicted boxes (Algorithm~\ref{alg:sparcl}). Evaluating a Faster R-CNN on VinDr-CXR and ChestX-Det10 (Table~\ref{tab:detection_sota}), \emph{SPARCL} attains the best mAP (7.7\% and 8.9\%), AP$_{50}$ (21.7\% and 26.2\%), and FROC (27.1\% and 26.6\%) at a 20\% budget. Fig.~\ref{fig:qual_det} shows it localizes dispersed findings more tightly and with fewer false positives than baselines, confirming that entropy-gated spatial coverage directly benefits detection.

With the box-based histogram of Sec.~\ref{sec:experiments}, a Faster R-CNN on VinDr-CXR and ChestX-Det10 (Table~\ref{tab:detection_sota}) attains the best mAP (7.7\% and 8.9\%), AP$_{50}$ (21.7\% and 26.2\%), and FROC (27.1\% and 26.6\%) at a 20\% budget under \emph{SPARCL}. Fig.~\ref{fig:qual_det} shows it localizes dispersed findings more tightly and with fewer false positives than baselines.

\subsubsection{Modality Generalization to Fundus.}
Beyond CXR, the same pipeline transfers to color fundus photography (DDR)~\cite{li2019diagnostic} for microaneurysm segmentation, a strongly \emph{diffuse} regime with tiny retinal lesions. Table~\ref{tab:fundus_generalization} and Fig.~\ref{fig:qual_fundus} show that \emph{SPARCL} recovers more scattered lesions than baselines, and reaches highest dice score, indicating cross-modality adaptability.

\subsection{Qualitative Analysis}
% Fig.~\ref{fig:selection_map} visualizes sample selection.
% Standard \textit{Uncertainty} sampling (top) exhibits severe \textit{spatial redundancy}, clustering queries around high-contrast edges while leaving large anatomical areas undersampled.
% This creates ``\emph{anatomical blind spots}'', causing the model to generate severe false positives (red circles) due to a lack of learned local features.
% Conversely, \emph{SPARCL} (bottom) utilizes entropy-gating to enforce a spatially balanced acquisition.
% By securing coverage across the entire lung field, our method effectively suppresses false positives and successfully segments the target lesion, confirming that spatial fairness directly translates to clinical reliability. Fig.~\ref{fig:qual_inhouse} demonstrates this on the in-house calcified-nodule cohort, where small, scattered lesions strictly require spatial coverage. While baselines frequently miss these diffuse nodules (Dice~$\approx0$), \emph{SPARCL} successfully recovers them. 
Fig.~\ref{fig:selection_map} visualizes acquisition: \textit{Uncertainty} sampling clusters queries around high-contrast edges and leaves large anatomical areas undersampled, producing ``\emph{anatomical blind spots}'' where the model later emits false positives (red circles), whereas \emph{SPARCL}'s cumulative coverage spreads queries over the lung field and the corresponding false positives disappear. Fig.~\ref{fig:qual_inhouse} shows the same effect on the in-house calcified-nodule cohort, where baselines frequently miss the small, scattered lesions (Dice~$\approx0$) that \emph{SPARCL} recovers.

\subsection{Ablation Study}

% \subsubsection{Entropy-Gate Ablation}
% % Table~\ref{tab:gate_ablation} isolates the contribution of the adaptive gate $\lambda^{(t)}$ on SIIM-ACR (5 seeds). Structurally fixing the gate ($\lambda=0.5$) or removing it completely ($\lambda\equiv\lambda_{\max}$) reverts \emph{SPARCL} to static region-balancing, which inflates the Budget@95\% from 21\% to 24\% and 33\%, respectively. Furthermore, removing the gate severely degrades spatial fairness, worsening $SI$ (from 0.553 to 0.629) and lowering $PC$ (from 0.176 to 0.167). The Budget@95\% gap (21\% vs.\ 33\%) exceeds the seed spread, while $SI$ and $PC$ move in the same direction but within one standard deviation; dynamic, entropy-aware gating is therefore what delivers label efficiency and spatial reliability together.
% Table~\ref{tab:gate_ablation} isolates the contribution of the adaptive gate $\lambda^{(t)}$ on SIIM-ACR (5 seeds). Structurally fixing the gate ($\lambda=0.5$) or removing it completely ($\lambda\equiv\lambda_{\max}$) reverts \emph{SPARCL} to static region-balancing, which inflates the Budget@95\% from 21\% to 24\% and 33\%, respectively. Furthermore, removing the gate severely degrades spatial fairness, worsening $SI$ (from 0.553 to 0.629) and lowering $PC$ (from 0.176 to 0.167). The Budget@95\% gap (21\% vs.\ 33\%) exceeds the seed spread, while $SI$ and $PC$ move in the same direction but within one standard deviation; dynamic, entropy-aware gating is therefore what delivers label efficiency and spatial reliability together.

\subsubsection{Component Ablation}
Table~\ref{tab:ablation_components} isolates the three components of \emph{SPARCL} on the in-house cohort. (A)~Resetting $H_0$ every round, i.e.\ within-batch coverage only, costs about two points of Budget@95\% (24.4\%$\rightarrow$26.5\%) and 2.2 points of Dice@20\%, and dropping $U(x)$ costs over four (28.7\%, Dice 37.1): coverage helps only when it redirects the budget \emph{among} uncertain images and remembers where lesions have already been annotated. (B)~Because uncertainty on this mixed cohort spans many bins, $\lambda^{(t)}$ stays near $\lambda_{\max}$ and removing the gate changes all three metrics by less than one seed standard deviation, as intended (Sec.~\ref{sec:gate}); the gate is a safeguard rather than a source of gain here. Halving or doubling $\lambda_{\max}$ worsens Budget@95\% to 25.2\% and 25.5\%. (C)~Without the floor, $r{=}1$ reduces to uncertainty sampling (29.3\%, SI~0.750) and ``full'' is worse still (30.2\%, SI~0.771), since coverage then admits images whose uncertainty in empty bins is spurious.

\input{tables/table_ablation_components}

\begin{figure}[t]
    \centering
    \includegraphics[width=\columnwidth]{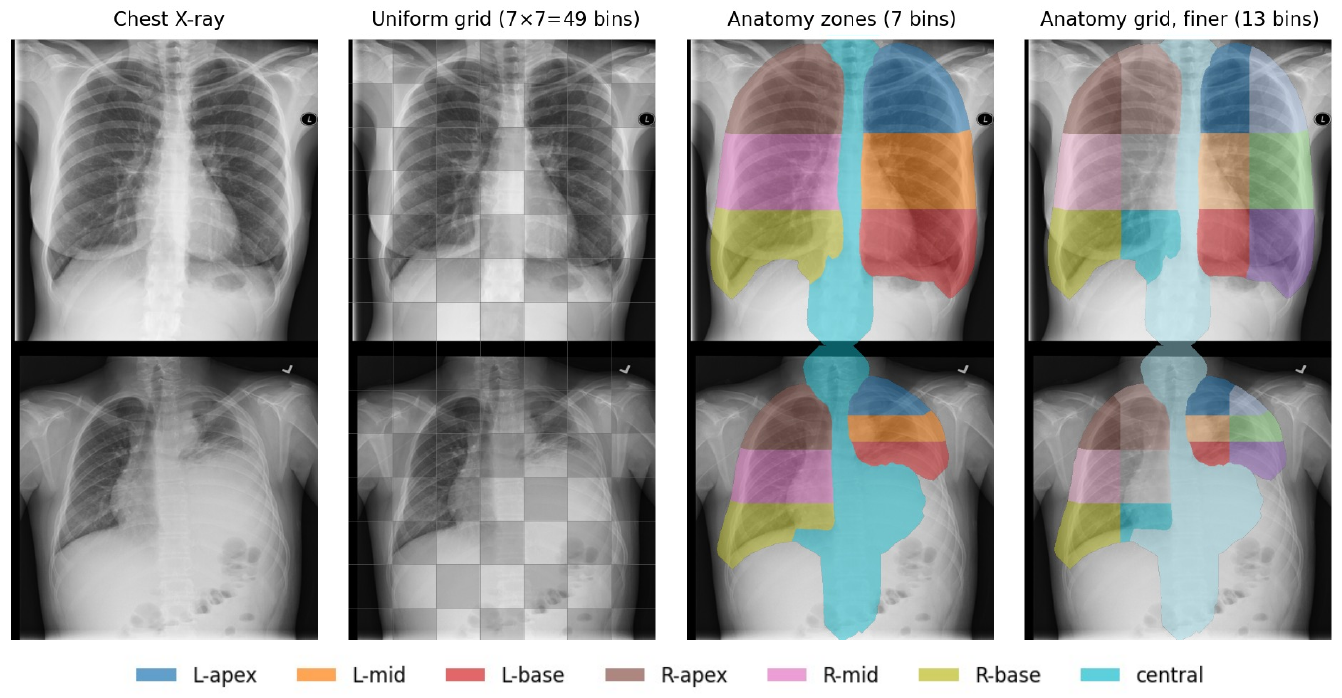}
    \caption{\textbf{Uniform grid vs.\ registration-free anatomy bins.} Rows: two patients; columns: chest X-ray, fixed $7{\times}7$ grid, seven lung-anatomy zones (apex/mid/base per lung + central), and a finer 13-bin within-lung subdivision. Anatomy bins come \emph{per image} from lung segmentation, so a given bin denotes the same anatomical region across patients despite body habitus, rotation, or view.}
    \label{fig:anatomy_bins}
\end{figure}

\input{tables/table_grid_ablation}

\subsubsection{Robustness to Grid Resolution and Evaluation Protocols}
% Table~\ref{tab:integrated_ablation} evaluates \emph{SPARCL}'s sensitivity to grid size ($M$) and evaluation split protocols. While finer bins (e.g., $8 \times 8$) degrade performance via over-fragmentation, the $7 \times 7$ grid achieves the optimal balance, maximizing overall Dice (0.626) and Mean Bin Dice (0.355) while minimizing the required budget (22\%). Crucially, under the challenging \textit{Spatial-Equal} and \textit{Spatial-Dynamic} splits, \emph{SPARCL} consistently maintains the lowest $SI$ and highest $PC$. Notably, in the \textit{Spatial-Equal} protocol, \emph{SPARCL} reaches 95\% performance at a mere 20\% budget---drastically outperforming USIMC (29\%) and Uncertainty (31\%)---proving its robust spatial consistency across diverse evaluation scenarios.
Table~\ref{tab:integrated_ablation} evaluates sensitivity to grid size ($M$) and evaluation split. Finer bins (e.g., $8 \times 8$) degrade performance through over-fragmentation, whereas the $7 \times 7$ grid gives the best overall Dice (0.463) and Mean Bin Dice (0.465) at the lowest budget (24\%). Under the \textit{Spatial-Equal} and \textit{Spatial-Dynamic} splits \emph{SPARCL} retains the lowest $SI$ and highest $PC$, and under \textit{Spatial-Equal} it reaches 95\% performance at a 20\% budget versus 29\% for USIMC and 31\% for Uncertainty.

\subsubsection{Robustness to the Bin Definition: Grid vs.\ Anatomy.}
% Our $7{\times}7$ grid is a deliberately simple, \emph{registration-free} partition: it is spatial rather than anatomical, since a fixed cell need not correspond to the same anatomy across patients. To confirm that \emph{SPARCL} does not hinge on this arbitrary choice, we replace the grid with seven \emph{per-image} lung-anatomy zones (apex/mid/base per lung plus a central region) obtained from an off-the-shelf lung segmentation, ensuring bin $m$ tracks the same anatomical region regardless of body habitus or view (Fig.~\ref{fig:anatomy_bins}). Table~\ref{tab:bin_definition} compares these partitions under an identical protocol. The results are statistically indistinguishable in both label efficiency and spatial reliability. The simple grid matches the anatomy-aware zones \emph{without} introducing an additional segmentation dependency. Therefore, \emph{SPARCL} is agnostic to the bin definition, proving that the reported spatial-reliability gains are not grid artifacts and ensuring applicability to modalities where reliable anatomical parcellation is unavailable.
A fixed $7{\times}7$ cell need not correspond to the same anatomy across patients. To confirm that \emph{SPARCL} does not hinge on this registration-free choice, we replace the grid with seven \emph{per-image} lung-anatomy zones obtained from an off-the-shelf lung segmentation (Fig.~\ref{fig:anatomy_bins}) and compare both partitions under an identical protocol (Table~\ref{tab:bin_definition}). The three are statistically indistinguishable in label efficiency and spatial reliability: the grid matches the anatomy-aware zones without an additional segmentation dependency, so the reported spatial-reliability gains are not grid artifacts and the method remains applicable where anatomical parcellation is unavailable.

\input{tables/table_bin_definition}
\input{tables/table_concave_gain}

% \subsubsection{Robustness to the Concave Gain $\psi$.}
% The coverage gain rewards under-represented bins under \emph{any} concave, non-decreasing $\psi$ with $\psi(0)=0$; Table~\ref{tab:concave_gain} compares three members of this family under an identical protocol. Replacing $\psi(z)=\sqrt{z}$ with $\log(1+z)$ or $z^{1/3}$ moves Budget@95\% from $22\pm2\%$ to $23\pm2\%$ and $24\pm1\%$, Dice@20\% from $43.4\pm2.5$ to $41.8\pm1.9$ and $42.3\pm1.8$, $SI$ from $0.602\pm0.108$ to $0.673\pm0.080$ and $0.608\pm0.051$, and $PC$ from $0.140\pm0.037$ to $0.131\pm0.043$ and $0.140\pm0.015$; every difference lies within one seed standard deviation. \emph{SPARCL} is therefore not sensitive to the specific concave form. We adopt $\sqrt{z}$ because it gives the best budget and Dice among the three and because its steeper slope near the origin most strongly rewards covering empty bins in early rounds.

\subsubsection{Robustness to the Concave Gain $\psi$.}
The coverage gain rewards under-represented bins under \emph{any} concave, non-decreasing $\psi$ with $\psi(0)=0$; Table~\ref{tab:concave_gain} compares three members of this family under an identical protocol. Replacing $\sqrt{z}$ with $\log(1+z)$ or $z^{1/3}$ shifts Budget@95\% by less than two points and leaves Dice@20\%, $SI$, and $PC$ within one seed standard deviation. \emph{SPARCL} is therefore not sensitive to the specific concave form; we adopt $\sqrt{z}$ because it gives the best budget and Dice among the three and its steeper slope near the origin most strongly rewards covering empty bins in early rounds.

\subsubsection{Hyperparameter Sensitivity.}
\emph{SPARCL} exposes two hyperparameters beyond the grid, the gate ceiling $\lambda_{\max}$ and the candidate ratio $r$; both are swept in Table~\ref{tab:ablation_components}~(B)/(C). $r{=}1$ and ``full'' bracket the design---uncertainty sampling on one side, unrestricted cumulative coverage on the other. We fix $\lambda_{\max}=1.0$ and $r=5$ throughout.

%% file: tables/table_fundus.tex
% MODALITY GENERALIZATION — DDR fundus (microaneurysms, diffuse regime)

\begin{table}[t]
\centering
\caption{
\textbf{Modality generalization to color fundus photography} (DDR, microaneurysm
segmentation, three shared seeds), showing the CXR pipeline transfers to a different imaging modality. Dice@20\% and AULC use the annotation-budget fraction; Final Dice is at the last
AL round.}
\label{tab:fundus_generalization}
\scriptsize
\setlength{\tabcolsep}{10pt}
\renewcommand{\arraystretch}{1.15}
\begin{tabular}{lccc}
\toprule
\textbf{Method} & Dice@20\%\,$\uparrow$ & AULC\,$\uparrow$ & Final Dice\,$\uparrow$ \\
\midrule
Random            & 21.0 $\pm$ 0.29 & 25.9 $\pm$ 0.13 & 31.3 $\pm$ 0.21 \\
Area-Random       & 23.4 $\pm$ 0.13 & 26.3 $\pm$ 0.07 & 31.5 $\pm$ 0.14 \\
Uncertainty       & 21.4 $\pm$ 0.12 & 26.4 $\pm$ 0.05 & 31.6 $\pm$ 0.10 \\
Coreset           & \underline{22.9 $\pm$ 0.21} & \underline{26.9 $\pm$ 0.13} & \underline{32.2 $\pm$ 0.11} \\
Diversity         & 15.4 $\pm$ 0.26 & 24.2 $\pm$ 0.21 & 29.4 $\pm$ 0.12 \\
TAUDIS            & 16.5 $\pm$ 0.17 & 22.0 $\pm$ 0.15 & 30.5 $\pm$ 0.13 \\
Loss-Prediction   & 14.9 $\pm$ 0.17 & 22.9 $\pm$ 0.09 & 29.1 $\pm$ 0.10 \\
\rowcolor{Gray} \textbf{SPARCL (Ours)}    & \textbf{26.2 $\pm$ 0.23} & \textbf{29.5 $\pm$ 0.11} & \textbf{32.6 $\pm$ 0.15} \\
\bottomrule
\end{tabular}
\end{table}

%% file: tables/table_ablation_components.tex
\begin{table}[t]
\centering
\caption{\textbf{Ablation of \emph{SPARCL} components and hyperparameters} (in-house calcification, $7\times7$ grid, mean $\pm$ std over five shared seeds). \textbf{(A)} Coverage state: cumulative $H_0$ (Eq.~\eqref{eq:h0}) vs.\ resetting the state every round, and coverage without the uncertainty term. \textbf{(B)} Gate: removing it ($\lambda\equiv\lambda_{\max}$) and varying the ceiling $\lambda_{\max}$. \textbf{(C)} Uncertainty floor: $r{=}1$ is uncertainty sampling (the batch is the top-$B$ by $U(x)$); ``full'' runs the greedy step over the whole pool.}
\label{tab:ablation_components}
\small
\setlength{\tabcolsep}{4pt}
\renewcommand{\arraystretch}{1.1}
\resizebox{\columnwidth}{!}{%
\begin{tabular}{l|c|c|c}
\toprule
\textbf{Configuration} & \textbf{Dice@20\% $\uparrow$} & \textbf{Budget@95\% $\downarrow$} & \textbf{SI $\downarrow$}  \\
\midrule
\multicolumn{4}{l}{\textbf{(A) Coverage state}} \\
\midrule
\textbf{\emph{SPARCL}} (cumulative $H_0$, gate, $r{=}5$) & 43.4 $\pm$ 1.3 & 24.4 $\pm$ 1.4\% & 0.571 $\pm$ 0.038\\
reset ($H_0\equiv0$; within-batch coverage only)         & 41.2 $\pm$ 1.4 & 26.5 $\pm$ 1.6\% & 0.592 $\pm$ 0.012\\
coverage only ($U(x)$ removed)                           & 37.1 $\pm$ 1.6 & 28.7 $\pm$ 1.5\% & 0.624 $\pm$ 0.026\\
\midrule
\multicolumn{4}{l}{\textbf{(B) Positional-diversity gate}} \\
\midrule
w/o gate ($\lambda\equiv\lambda_{\max}$)                 & 43.1 $\pm$ 1.5 & 25.1 $\pm$ 1.9\% & 0.599 $\pm$ 0.034\\
$\lambda_{\max}=0.5$                                     & 43.1 $\pm$ 0.9 & 25.2 $\pm$ 1.2\% & 0.583 $\pm$ 0.018\\
$\lambda_{\max}=2.0$                                     & 42.9 $\pm$ 1.1 & 25.5 $\pm$ 1.1\% & 0.592 $\pm$ 0.021\\
\midrule
\multicolumn{4}{l}{\textbf{(C) Uncertainty floor}} \\
\midrule
$r=1$ (= uncertainty sampling)                           & 41.2 $\pm$ 1.1 & 29.3 $\pm$ 0.9\% & 0.750 $\pm$ 0.045 \\
full (no floor)                                          & 39.1 $\pm$ 0.8 & 30.2 $\pm$ 1.3\% & 0.771 $\pm$ 0.021  \\
\bottomrule
\end{tabular}%
}
\end{table}

%% file: tables/table_grid_ablation.tex
\begin{table}[t]
\centering
\caption{Ablation study on evaluation split strategies and grid size $M$ using the \textit{In-house Calcification} dataset. Dice, $SI$, $PC$, and Mean Bin Dice are measured at the final AL round and averaged over 5 seeds. \emph{SPARCL} consistently maintains superior spatial consistency and label efficiency across all settings.}
\label{tab:integrated_ablation}
\scriptsize
\resizebox{1.0\columnwidth}{!}{%
\begin{tabular}{l|c|c|c|c|c}
\toprule
\multicolumn{2}{c|}{\textbf{Ablation Category}} & \multirow{2}{*}{\textbf{Method}} & \textbf{Budget} & \multirow{2}{*}{\textbf{Dice($\uparrow$)}} & \multirow{2}{*}{\textbf{SI($\downarrow$)}} \\ \cline{1-2}
 {\textbf{Split Strategy}}& \textbf{Grid Size($K \times L$)} & & \textbf{@95\%($\downarrow$)} & &  \\
\hline

\multirow{4}{*}{Random}
& $M=6$~~($3 \times 2$)  & \emph{SPARCL} & 37\% & 0.433 & 0.421  \\
& $M=25$~($5 \times 5$) & \emph{SPARCL} & 28\% & 0.447 & 0.493  \\
& $M=49$~($7 \times 7$) & \emph{\textbf{SPARCL}} & \textbf{24\%} & \textbf{0.463} & \textbf{0.414}  \\
& $M=64$~($8 \times 8$) & \emph{SPARCL} & 24\% & 0.443 & 0.455  \\
\midrule

\multirow{4}{*}{Spatial Equal}
& \multirow{4}{*}{$M=49$~($7 \times 7$)}
& Random & 34\% & 0.445 & 0.511  \\
& & Uncertainty & 31\% & 0.455 & 0.486  \\
& & USIMC~\cite{follmer2024active} & 29\% & 0.444 & 0.520  \\
& & \textbf{\emph{SPARCL}} & \textbf{20\%} & \textbf{0.466} & \textbf{0.409}  \\
\midrule

\multirow{4}{*}{Spatial Dynamic}
& \multirow{4}{*}{$M=49$~($7 \times 7$)}
& Random & 39\% & 0.429 & 0.516  \\
& & Uncertainty & 34\% & 0.432 & 0.505  \\
& & USIMC~\cite{follmer2024active} & 28\% & 0.431 & 0.514  \\
& & \textbf{\emph{SPARCL}} & \textbf{24\%} & \textbf{0.464} & \textbf{0.481}  \\
\bottomrule
\end{tabular}%
}
\end{table}

%% file: tables/table_bin_definition.tex
\begin{table}[t]
\centering
\caption{\textbf{Bin definition: uniform grid vs.\ registration-free anatomy zones} (in-house, $20\%$ budget). Compares the standard $7{\times}7$ grid against anatomical regions (Fig.~\ref{fig:anatomy_bins}). The simple grid maintains competitive spatial reliability without requiring an external lung segmenter. Acquisition uses the partition in each row; SI and PC
  are evaluated on the fixed 7×7 grid for all rows.}
\label{tab:bin_definition}
\small
\setlength{\tabcolsep}{8pt}
\renewcommand{\arraystretch}{1.15}
\resizebox{\columnwidth}{!}{%
\begin{tabular}{lccc}
\toprule
\textbf{Bin scheme} & Dice@20\%\,$\uparrow$ & SI\,$\downarrow$ & PC\,$\uparrow$ \\
\midrule
Uniform grid ($7{\times}7$)      & 43.4 $\pm$ 1.3 & 0.571 $\pm$ 0.038 & 0.148 $\pm$ 0.009 \\
Anatomy zones (7 lung zones)    & 41.8 $\pm$ 3.1 & 0.630 $\pm$ 0.041 & 0.147 $\pm$ 0.008 \\
Anatomy zones (13 within-lung bins)    & 42.4 $\pm$ 2.2 & 0.597 $\pm$ 0.040 & 0.139 $\pm$ 0.013 \\
\bottomrule
\end{tabular}%
}
\end{table}

%% file: tables/table_concave_gain.tex
\begin{table}[t]
\centering
\caption{\textbf{Concave-gain ($\psi$) ablation} (In-house Calcification, Random held-out split, $7\times7$ grid, mean $\pm$ std over 5 seeds). Results are stable across concave functions, confirming the robustness of the diminishing-returns coverage reward.}
\label{tab:concave_gain}
\resizebox{\columnwidth}{!}{%
\begin{tabular}{l|c|c|c|c}
\toprule
\textbf{Gain $\psi(z)$} & \textbf{Budget@95\% $\downarrow$} & \textbf{Dice@20\% $\uparrow$} & \textbf{SI $\downarrow$} & \textbf{PC $\uparrow$} \\
\midrule
$\log(1+z)$          & 26.1 $\pm$ 2.1\% & 42.8 $\pm$ 1.9 & 0.613 $\pm$ 0.080 & 0.131 $\pm$ 0.043 \\
$\sqrt{z}$ (ours)    & 24.4 $\pm$ 1.4\% & 43.4 $\pm$ 1.3 & 0.571 $\pm$ 0.038 & 0.148 $\pm$ 0.009  \\
$z^{1/3}$            & 25.8 $\pm$ 1.5\% & 42.3 $\pm$ 1.8 & 0.608 $\pm$ 0.051 & 0.140 $\pm$ 0.015 \\
\bottomrule
\end{tabular}%
}
\end{table}

%% file: contents/5_conclusions.tex
\section{Conclusion}
In this paper, we proposed \emph{SPARCL}, an active learning framework that combines predictive uncertainty with spatial coverage accumulated over the lesions already labeled, gated off only when the pool offers no positional diversity to exploit. \emph{SPARCL} reaches 95\% of fully supervised performance using only a 24\% annotation budget on our in-house calcification cohort and 21\% on the SIIM-ACR benchmark. Evaluated through our proposed \textit{Spatial Inconsistency} and \textit{Performance Coverage} metrics, \emph{SPARCL} effectively reduces anatomical blind spots that aggregate metrics obscure. Future work will extend this spatially adaptive mechanism to 3D volumetric modalities (e.g., 3D CT) to resolve spatial information loss.

%% file: contents/6_data.tex
\section{Data Availability}
% The in-house calcification dataset (IRB No. 2017-1067, IRB No. B-1704/390-109) is private and cannot be released publicly; it may be made available from the authors upon reasonable request. The public datasets are openly available: VinDr-CXR~\cite{nguyen2022vindr}, SIIM-ACR~\cite{filice2020crowdsourcing}, ChestX-Det10~\cite{liu2020chestx}, and DDR~\cite{li2019diagnostic}. 

The in-house calcification dataset is private and cannot be released publicly; it may be made available from the authors upon reasonable request. The public datasets are openly available: VinDr-CXR~\cite{nguyen2022vindr}, SIIM-ACR~\cite{filice2020crowdsourcing}, ChestX-Det10~\cite{liu2020chestx}, and DDR~\cite{li2019diagnostic}.

%% file: contents/references.tex
% =====================================================================
% IEEE TMI (IEEEtran) 형식으로 변환한 references
% - \cite{key} 는 기존 그대로 사용 가능 (bibitem 키 유지)
% - IEEE 규칙: 저자 6명 이하 -> 전원 표기 / 7명 이상 -> 첫 저자 + et al.
% - 제목은 sentence case + 고유명사/약어 보존 (U-Net, CheXpert 등)
% - arXiv: "YYYY, arXiv:XXXX.XXXXX." 형식
% - 저널 논문번호형: "Art. no. XXX"
% - DOI는 알고 있으면 각 항목 끝에 ", doi: 10.xxxx/xxxx." 로 추가 권장
% =====================================================================

%% file: main.bbl
\section*{REFERENCES}
\begin{thebibliography}{99}

% ---------- Introduction (1st paragraph) ----------
\bibitem{long2015fully} J. Long, E. Shelhamer, and T. Darrell, ``Fully convolutional networks for semantic segmentation,'' in {\it Proc. IEEE Conf. Comput. Vis. Pattern Recognit. (CVPR)}, 2015, pp. 3431--3440.


\bibitem{ronneberger2015u} O. Ronneberger, P. Fischer, and T. Brox, ``U-Net: Convolutional networks for biomedical image segmentation,'' in {\it Proc. Int. Conf. Med. Image Comput. Comput.-Assist. Intervent. (MICCAI)}, 2015, pp. 234--241.

\bibitem{wang2017chestx} X. Wang, Y. Peng, L. Lu, Z. Lu, M. Bagheri, and R. M. Summers, ``ChestX-ray8: Hospital-scale chest X-ray database and benchmarks on weakly-supervised classification and localization of common thorax diseases,'' in {\it Proc. IEEE Conf. Comput. Vis. Pattern Recognit. (CVPR)}, 2017, pp. 2097--2106.

\bibitem{irvin2019chexpert} J. Irvin {\it et al.}, ``CheXpert: A large chest radiograph dataset with uncertainty labels and expert comparison,'' in {\it Proc. AAAI Conf. Artif. Intell.}, vol. 33, no. 1, 2019, pp. 590--597.

\bibitem{budd2021survey} S. Budd, E. C. Robinson, and B. Kainz, ``A survey on active learning and human-in-the-loop deep learning for medical image analysis,'' {\it Med. Image Anal.}, vol. 71, 2021, Art. no. 102062.

\bibitem{nath2020diminishing} V. Nath, D. Yang, B. A. Landman, D. Xu, and H. R. Roth, ``Diminishing uncertainty within the training pool: Active learning for medical image segmentation,'' {\it IEEE Trans. Med. Imag.}, vol. 40, no. 10, pp. 2534--2547, 2020.

\bibitem{smailagic2018medal} A. Smailagic {\it et al.}, ``MedAL: Accurate and robust deep active learning for medical image analysis,'' in {\it Proc. IEEE Int. Conf. Mach. Learn. Appl. (ICMLA)}, 2018, pp. 481--488.

\bibitem{cai2021exploring} L. Cai, X. Xu, L. Zhang, and C.-S. Foo, ``Exploring spatial diversity for region-based active learning,'' {\it IEEE Trans. Image Process.}, vol. 30, pp. 8702--8712, 2021.

\bibitem{nguyen2021goal} C. Nguyen {\it et al.}, ``GOAL: Gist-set online active learning for efficient chest X-ray image annotation,'' in {\it Proc. Med. Imag. Deep Learn. (MIDL)}, 2021, pp. 545--553.

% ---------- Introduction (2nd paragraph) ----------
\bibitem{settles2009active} B. Settles, ``Active learning literature survey,'' Dept. Comput. Sci., Univ. Wisconsin--Madison, Madison, WI, USA, Comput. Sci. Tech. Rep. 1648, 2009.

\bibitem{ren2021survey} P. Ren {\it et al.}, ``A survey of deep active learning,'' {\it ACM Comput. Surv.}, vol. 54, no. 9, pp. 1--40, 2021.

\bibitem{shi2024predictive} J. Shi {\it et al.}, ``Predictive accuracy-based active learning for medical image segmentation,'' 2024, arXiv:2405.00452.

\bibitem{gao2025deep} C. Gao {\it et al.}, ``Deep learning in pulmonary nodule detection and segmentation: A systematic review,'' {\it Eur. Radiol.}, vol. 35, no. 1, pp. 255--266, 2025.

\bibitem{marinakis2024pulmonary} I. Marinakis, K. Karampidis, and G. Papadourakis, ``Pulmonary nodule detection, segmentation and classification using deep learning: A comprehensive literature review,'' {\it BioMedInformatics}, vol. 4, no. 3, pp. 2043--2106, 2024.

\bibitem{gal2016dropout} Y. Gal and Z. Ghahramani, ``Dropout as a Bayesian approximation: Representing model uncertainty in deep learning,'' in {\it Proc. Int. Conf. Mach. Learn. (ICML)}, 2016, pp. 1050--1059.

\bibitem{sener2017active} O. Sener and S. Savarese, ``Active learning for convolutional neural networks: A core-set approach,'' 2017, arXiv:1708.00489.

\bibitem{yang2017suggestive} L. Yang, Y. Zhang, J. Chen, S. Zhang, and D. Z. Chen, ``Suggestive annotation: A deep active learning framework for biomedical image segmentation,'' in {\it Proc. Int. Conf. Med. Image Comput. Comput.-Assist. Intervent. (MICCAI)}, 2017, pp. 399--407.

\bibitem{gorriz2017cost} M. Gorriz, A. Carlier, E. Faure, and X. Giro-i-Nieto, ``Cost-effective active learning for melanoma segmentation,'' 2017, arXiv:1711.09168.

\bibitem{kirsch2019batchbald} A. Kirsch, J. Van Amersfoort, and Y. Gal, ``BatchBALD: Efficient and diverse batch acquisition for deep Bayesian active learning,'' in {\it Adv. Neural Inf. Process. Syst. (NeurIPS)}, vol. 32, 2019.

\bibitem{ash2019deep} J. T. Ash, C. Zhang, A. Krishnamurthy, J. Langford, and A. Agarwal, ``Deep batch active learning by diverse, uncertain gradient lower bounds,'' 2019, arXiv:1906.03671.

\bibitem{siddiqui2020viewal} Y. Siddiqui, J. Valentin, and M. Nie{\ss}ner, ``ViewAL: Active learning with viewpoint entropy for semantic segmentation,'' in {\it Proc. IEEE/CVF Conf. Comput. Vis. Pattern Recognit. (CVPR)}, 2020, pp. 9430--9440

\bibitem{mackowiak2018cereals} R. Mackowiak, P. Lenz, O. Ghori, F. Diego, O. Lange, and C. Rother, ``CEREALS -- Cost-effective region-based active learning for semantic segmentation,'' 2018, arXiv:1810.09726.

\bibitem{wang2016cost} K. Wang, D. Zhang, Y. Li, R. Zhang, and L. Lin, ``Cost-effective active learning for deep image classification,'' {\it IEEE Trans. Circuits Syst. Video Technol.}, vol. 27, no. 12, pp. 2591--2600, 2016.

\bibitem{gal2017deep} Y. Gal, R. Islam, and Z. Ghahramani, ``Deep Bayesian active learning with image data,'' in {\it Proc. Int. Conf. Mach. Learn. (ICML)}, 2017, pp. 1183--1192.

\bibitem{kadir2023edgeal} M. A. Kadir, H. M. T. Alam, and D. Sonntag, ``EdgeAL: An edge estimation based active learning approach for OCT segmentation,'' in {\it Proc. Int. Conf. Med. Image Comput. Comput.-Assist. Intervent. (MICCAI)}, 2023, pp. 79--89.

\bibitem{tian2023fairseg} Y. Tian, M. Shi, Y. Luo, A. Kouhana, T. Elze, and M. Wang, ``FairSeg: A large-scale medical image segmentation dataset for fairness learning using segment anything model with fair error-bound scaling,'' in {\it Proc. Int. Conf. Learn. Represent. (ICLR)}, 2024, pp. 44056--44070.

\bibitem{taha2015metrics} A. A. Taha and A. Hanbury, ``Metrics for evaluating 3D medical image segmentation: Analysis, selection, and tool,'' {\it BMC Med. Imag.}, vol. 15, no. 1, 2015, Art. no. 29.

\bibitem{sudre2017generalised} C. H. Sudre, W. Li, T. Vercauteren, S. Ourselin, and M. Jorge Cardoso, ``Generalised Dice overlap as a deep learning loss function for highly unbalanced segmentations,'' in {\it Proc. Int. Workshop Deep Learn. Med. Image Anal. (DLMIA)}, 2017, pp. 240--248.

\bibitem{reinke2021common} A. Reinke {\it et al.}, ``Common limitations of image processing metrics: A picture story,'' 2021, arXiv:2104.05642.

% ---------- related work
\bibitem{gaillochet2023active} M. Gaillochet, C. Desrosiers, and H. Lombaert, ``Active learning for medical image segmentation with stochastic batches,'' {\it Med. Image Anal.}, vol. 90, 2023, Art. no. 102958.
\bibitem{hacohen2022active} G. Hacohen, A. Dekel, and D. Weinshall, ``Active learning on a budget: Opposite strategies suit high and low budgets,'' in {\it Proc. Int. Conf. Mach. Learn. (ICML)}, 2022, pp. 8175--8195.
\bibitem{yehuda2022active} O. Yehuda, A. Dekel, G. Hacohen, and D. Weinshall, ``Active learning through a covering lens,'' in {\it Adv. Neural Inf. Process. Syst. (NeurIPS)}, vol. 35, 2022, pp. 22354--22367.
\bibitem{xie2022towards} B. Xie, L. Yuan, S. Li, C. H. Liu, and X. Cheng, ``Towards fewer annotations: Active learning via region impurity and prediction uncertainty for domain adaptive semantic segmentation,'' in {\it Proc. IEEE/CVF
Conf. Comput. Vis. Pattern Recognit. (CVPR)}, 2022, pp. 8058--8068.
\bibitem{shin2021all} G. Shin, W. Xie, and S. Albanie, ``All you need are a few pixels: Semantic segmentation with PixelPick,'' in {\it Proc. IEEE/CVF Int. Conf. Comput. Vis. Workshops (ICCVW)}, 2021, pp. 1687--1697.
\bibitem{lin2011class} H. Lin and J. Bilmes, ``A class of submodular functions for document summarization,'' in {\it Proc. 49th Annu. Meeting Assoc. Comput. Linguistics: Human Lang. Technol. (ACL-HLT)}, 2011, pp. 510--520.
\bibitem{maier2024metrics} L. Maier-Hein {\it et al.}, ``Metrics reloaded: Recommendations for image analysis validation,'' {\it Nat. Methods}, vol. 21, no. 2, pp. 195--212, 2024.


% ---------- Experiments: baselines ----------
\bibitem{follmer2024active} B. F{\"o}llmer, K. Schulze, C. Wald, S. Stober, W. Samek, and M. Dewey, ``Active learning with the nnU-Net and sample selection with uncertainty-aware submodular mutual information measure,'' in {\it Proc. Med. Imag. Deep Learn. (MIDL)}, 2024.

\bibitem{yoo2019learning} D. Yoo and I. S. Kweon, ``Learning loss for active learning,'' in {\it Proc. IEEE/CVF Conf. Comput. Vis. Pattern Recognit. (CVPR)}, 2019, pp. 93--102.

\bibitem{yu2023two} K. Yu {\it et al.}, ``Two-step active learning for instance segmentation with uncertainty and diversity sampling,'' 2023, arXiv:2309.16139.

% ---------- Experiments: open datasets ----------
\bibitem{nguyen2022vindr} H. Q. Nguyen {\it et al.}, ``VinDr-CXR: An open dataset of chest X-rays with radiologist's annotations,'' {\it Sci. Data}, vol. 9, no. 1, 2022, Art. no. 429.

\bibitem{filice2020crowdsourcing} R. W. Filice {\it et al.}, ``Crowdsourcing pneumothorax annotations using machine learning annotations on the NIH chest X-ray dataset,'' {\it J. Digit. Imag.}, vol. 33, no. 2, pp. 490--496, 2020.

\bibitem{liu2020chestx} J. Liu, J. Lian, and Y. Yu, ``ChestX-Det10: Chest X-ray dataset on detection of thoracic abnormalities,'' 2020, arXiv:2006.10550.

\bibitem{lian2021structure} J. Lian {\it et al.}, ``A structure-aware relation network for thoracic diseases detection and segmentation,'' {\it IEEE Trans. Med. Imag.}, vol. 40, no. 8, pp. 2042--2052, 2021.

\bibitem{nemhauser1978analysis} G. L. Nemhauser, L. A. Wolsey, and M. L. Fisher, ``An analysis of approximations for maximizing submodular set functions---I,'' {\it Math. Program.}, vol. 14, no. 1, pp. 265--294, 1978.

\bibitem{ma2013submodularity} Y. Ma, R. Garnett, and J. Schneider, ``Submodularity in batch active learning and survey problems on Gaussian random fields,'' 2012, arXiv:1209.3694.

\bibitem{li2019diagnostic} T. Li, Y. Gao, K. Wang, S. Guo, H. Liu, and H. Kang, ``Diagnostic assessment of deep learning algorithms for diabetic retinopathy screening,'' {\it Inf. Sci.}, vol. 501, pp. 511--522, 2019.

\end{thebibliography}
